\documentclass[10pt]{article}

\usepackage[margin=1in]{geometry}
\usepackage{natbib}
\usepackage{parskip}
\usepackage[hyphens]{url}  
\usepackage{graphicx} 
\usepackage{caption} 
\usepackage{algorithm}
\usepackage{algorithmic}

\usepackage[most]{tcolorbox}

\newtcolorbox{graybox}{
  colback=gray!10,
  colframe=gray!10,
  boxrule=0pt,
  arc=2pt,
  left=2pt,
  right=2pt,
  top=2pt,
  bottom=2pt,
  breakable
}

\usepackage{newfloat}
\usepackage{listings}
\DeclareCaptionStyle{ruled}{labelfont=normalfont,labelsep=colon,strut=off} 
\floatstyle{ruled}
\newfloat{listing}{tb}{lst}{}
\floatname{listing}{Listing}

\usepackage{booktabs}

\usepackage{pgfplots}
\pgfplotsset{compat=1.18}

\usepackage[utf8]{inputenc}	
\usepackage{amsmath,amsthm,amsfonts,amssymb,amscd}
\usepackage{multirow,booktabs}
\usepackage[table]{xcolor}
\usepackage{algorithm}
\usepackage{algorithmic}
\usepackage{lastpage}
\usepackage{enumitem}
\usepackage{fancyhdr}
\usepackage{mathrsfs}
\usepackage{calc}
\usepackage{cancel}
\usepackage{tikz}
\usepackage[retainorgcmds]{IEEEtrantools}
\usepackage{amsmath}

\usepackage{empheq}
\usepackage{framed}
\usepackage[most]{tcolorbox}
\usepackage{xcolor}
\usepackage{tcolorbox}
\usepackage{thm-restate}
\usepackage{xcolor}
\colorlet{shadecolor}{orange!15}
\theoremstyle{definition}

\usepackage{amsmath}
\usepackage{amssymb}

\RequirePackage{xspace}

\newcommand{\algname}{\texttt{Adaptive Robust ETC}\@\xspace}
\newcommand{\algnameshort}{\texttt{AdaR-ETC}\@\xspace}

\usepackage{amsthm}

\theoremstyle{plain}

\newtheorem{theorem}{Theorem}
\newtheorem{lemma}[theorem]{Lemma}

\newtheorem{definition}[theorem]{Definition}
\usepackage{natbib}

\DeclareMathOperator*{\argmax}{arg\,max}

\title{Parameter-Free Heavy-Tailed Bandits}
\author{Gianmarco Genalti \\
    \texttt{gianmarco.genalti@polimi.it} \\
    Politecnico di Milano 
\and
Alberto Maria Metelli\\
\texttt{albertomaria.metelli@polimi.it} \\
    Politecnico di Milano 
\and
}
\date{July 2026}

\begin{document}

\setlength{\abovedisplayskip}{5pt}
\setlength{\belowdisplayskip}{5pt}
\setlength{\abovedisplayshortskip}{3pt}
\setlength{\belowdisplayshortskip}{3pt}
\setlength{\floatsep}{4pt}
\setlength{\textfloatsep}{5pt}

\maketitle

\begin{abstract}
Heavy-tailed distributions arise naturally in sequential decision-making problems such as financial investment, online advertising, and network management, where rare but extreme outcomes can dominate performance. Heavy-tailed bandits model online decision-making in these settings by assuming only that rewards $X$ satisfy $\mathbb{E}[|X|^{1+\epsilon}]\leq u$, for some tail exponent $\epsilon\in(0,1]$ and moment bound $u<+\infty$. However, most existing regret minimization algorithms require these parameters to be known. This assumption is particularly restrictive in practice: $\epsilon$ and $u$ govern the frequency and magnitude of rare events and are therefore precisely the quantities that are hardest to infer reliably from limited observations.

Motivated by an open problem posed by Genalti and Metelli at COLT 2025, we resolve the assumption-free adaptation problem for heavy-tailed bandits and characterize the price in the regret of not knowing the tail parameters. We first study adaptation to the moment bound $u$ for a fixed tail exponent $\epsilon$. We prove that every algorithm unaware of $u$, or of any upper bound on it, must obey a sharp trade-off between its distribution-dependent and distribution-free regret guarantees. We then introduce a scheduled-exploration algorithm that requires no knowledge of $u$ and matches the resulting adaptation frontier up to logarithmic factors. Finally, we show that the same algorithm can be instanced without knowing $\epsilon$ by calibrating its exploration schedule to the endpoint $\epsilon=1$. It achieves sublinear regret for every fixed $\epsilon>0$, while no algorithm can guarantee sublinear regret uniformly over all $\epsilon\in(0,1]$. Altogether, our results resolve the COLT open problem without additional distributional assumptions and provide a sharp characterization of the statistical cost of adapting to unknown heavy tails.
\end{abstract}

\section{Introduction}

Heavy-tailed rewards arise naturally in sequential decision-making problems
such as financial investment \citep{gagliolo2011algorithm, genalti2026catoni}, online advertising \citep{anderson2007long}, and network management \citep{liebeherr2012delay},
where rare but extreme observations may dominate performance. The
heavy-tailed stochastic multi-armed bandit model captures these settings by
assuming only that the rewards $X$ of every arm satisfy $\mathbb E[|X|^{1+\epsilon}]
    \le u$ for some $\epsilon\in(0,1]$ and $u<+\infty$. The parameter $\epsilon$, named \emph{tail exponent},
controls the heaviness of the tails, while $u$, named \emph{moment bound}, controls their scale. When
both parameters are known, robust estimators can be calibrated to attain the
distribution-free regret of
\begin{align}\label{eq:oracleBound}
\widetilde{\mathcal{O}}\big(u^{\frac{1}{1+\epsilon}}
    K^{\frac{\epsilon}{1+\epsilon}}
    T^{\frac{1}{1+\epsilon}}\big),
\end{align}    
where $K$ is the number of arms and $T$ is the horizon \citep{bubeck2013bandits}.

The knowledge of $(\epsilon,u)$ is particularly restrictive in the
real-world. Both parameters describe the behavior of
rare observations and are therefore difficult to infer reliably from
limited data. Moreover, misspecifying $\epsilon$ changes the polynomial
concentration rate of the estimators \citep{lugosi2019mean}, rather than merely its constants. Prior work showed
that the known-parameter guarantees cannot generally be recovered without
either paying an additional regret or imposing further distributional
assumptions \citep{genalti2024varepsilon}. This motivated the open problem of
\citet{genalti2025open} asking what are the best assumption-free regret guarantees
when the heavy-tail parameters are unknown and which algorithms can attain
them. 

\paragraph{Contributions.} In this paper, we first fix the tail exponent $\epsilon$ and study adaptation to the
unknown moment bound $u$. Rather than considering only the best distribution-free
rate, we characterize how robustness to arbitrary scales $u$ trades off with
performance on favorable instances. Let $\Phi_{free}(K,T)$ denote a
moment-free \emph{distribution-free} regret rate and let $\Phi_{dep}(K,T)$ denote the
gap-sum-normalized \emph{distribution-dependent} regret rate.\footnote{These quantities will be formally defined later in the paper.} We prove that every
strategy unaware of $u$ must satisfy (Theorem \ref{thr:adaptiveLB})
\begin{align}\label{eq:front}
    \Phi_{dep}(K,T)
    \Phi_{free}(K,T)^{\frac{1+\epsilon}{\epsilon}}
    =
    \Omega\big(
        T^{\frac{1+\epsilon}{\epsilon}}
    \big).
\end{align}
Thus, improving the distribution-free guarantee necessarily deteriorates
the distribution-dependent one. This establishes a \emph{frontier} that reveals how adaptation trades off distribution-dependent and distribution-free guarantees.

We complement this lower bound by proposing an adaptive regret minimization algorithm, \texttt{Adaptive Robust ETC}
(\algnameshort), which leverages Median-of-Means \citep{lugosi2019mean} Explore-Then-Commit \citep{lattimore2020bandit} strategy that does
not make use of the knowledge of $u$. To achieve adaptivity, \algnameshort is parametrized by  
    $\alpha
    \in
    \left[
        (1+\epsilon)/(1+2\epsilon),
        1
    \right)$, 
    $q
    \in
    \left[
        0,
        \epsilon/(1+2\epsilon)
    \right]$,
and $\beta_\alpha
    =
    (1-\alpha)(1+\epsilon)/\epsilon$,
and obtains a distribution-free regret bound (Theorem \ref{thr:adaptiveUBfree}),
\begin{align}
    \Phi_{free}(K,T)
    =
    \widetilde{\mathcal O}\big(
        K^{\frac{\epsilon}{1+\epsilon}(1-q)}
        T^\alpha
    \big)
\end{align}
and a distribution-dependent regret bound (Theorem \ref{thr:adaptiveUBdep}),
\begin{align}
    \Phi_{dep}(K,T)
    =
    \widetilde{\mathcal O}\big(
        K^{q-1}T^{\beta_\alpha}
    \big).
\end{align}
The combination of these guarantees is tight on the joint lower bound frontier of Equation \eqref{eq:front}.
Balancing
the horizon dependence and the one on the number of arms gives
\begin{align}\label{eq:boundEpsKnown}
    \widetilde{\mathcal O}\big(
        u^{\frac{1}{1+\epsilon}}
        K^{\frac{\epsilon}{1+2\epsilon}}
        T^{\frac{1+\epsilon}{1+2\epsilon}}
    \big).
\end{align}

This also represents the best possible distribution-free guarantee that can be obtained by any algorithm unaware of $u$. 
As visible from the exponent of $T$, this regret bound is worse compared to the one of Equation \eqref{eq:oracleBound}, establishing the price of adaptivity.
Specializing this bound in the finite variance case ($\epsilon=1$), we obtain a $\widetilde{\mathcal{O}}(T^\frac{2}{3})$ rate, strictly greater than the $\widetilde{\mathcal{O}}(\sqrt{T})$ rate attainable in bandits with bounded or subgaussian rewards \citep{lattimore2020bandit}.

We then remove the knowledge of $\epsilon$. The Median-of-Means estimator uses neither $\epsilon$ nor $u$; only the exploration schedule makes use of $\epsilon$. Calibrating it to the known endpoint $\epsilon=1$ yields a single
strategy, independent of both parameters, satisfying a distribution-free regret bound 
\begin{align}
    \widetilde{\mathcal O}\big(
        u^{\frac{1}{1+\epsilon}}
        K^{\frac{2\epsilon}{3(1+\epsilon)}}
        T^{\frac{3+\epsilon}{3(1+\epsilon)}}
    \big)
\end{align}
for every fixed true $\epsilon>0$ (Theorem \ref{thr:adaptive_unknown_parameters}). Thus matches the bound of Eq. \eqref{eq:boundEpsKnown} for $\epsilon = 1$, but deteriorates for all $\epsilon \in (0,1)$. Furthermore, we prove a pairwise lower bound (Theorem \ref{thm:unknown_epsilon_frontier}) across
moment orders showing that this profile is optimal, up to logarithmic
factors, among strategies retaining the balanced finite-variance guarantee
at $\epsilon=1$. Hence, no single policy is optimal at every moment order;
adaptation is described by a frontier rather than by one oracle curve.

Finally, our pointwise guarantee cannot be made uniform. Although
the regret is sublinear for every fixed $\epsilon>0$, no strategy can
guarantee sublinear regret (normalized by $u$) uniformly over
$\epsilon\in(0,1]$ (Corollary \ref{cor:uniform_epsilon_impossibility}). Indeed, as $\epsilon$ approaches zero, the finite-moment
assumption becomes arbitrarily weak and the exponent of $T$ approaches
one. 

Altogether, our results characterize the assumption-free cost of
$u$-adaptivity, provide an algorithm attaining the
resulting distribution-dependent frontier, and identify the limits of
simultaneous $(\epsilon,u)$-adaptation.

\section{Heavy-Tailed Bandits}
We recall some fundamental notions on heavy-tailed bandits \citep{bubeck2013bandits} and the required notations for regret rates defined in \citep{hadiji2020adaptation}.

\paragraph{Interaction Protocol.}
In the stochastic multi-armed bandit problem
\citep{lattimore2020bandit}, a learner interacts with $K\in\mathbb N_{\ge2}$
arms for a horizon of $T\in\mathbb N$ rounds. A bandit instance is an
ordered tuple $\underline\nu=(\nu_1,\ldots,\nu_K)$ of probability distributions on $\mathbb R$. For every arm $i\in[K]$,\footnote{Given $k\in\mathbb{N}$, we define $[k] \coloneqq \{1, 2, \ldots,k\}$.}
successive pulls produce an i.i.d.\ sequence
$X_{i,1},X_{i,2},\ldots
    \stackrel{\mathrm{i.i.d.}}{\sim}
    \nu_i$,
and the reward sequences are independent across arms.

At every round $t\in[T]$, the learner selects an arm $I_t\in[K]$ according
to a possibly randomized rule $\pi$ measurable w.r.t. the history up to
$t-1$. The learner then observes the reward generated by the
selected arm $X_{I_t,t}$. Let
$
    N_i(t)
    =
    \sum_{s=1}^t
    \mathbf{1}\{I_s=i\}
$
be the number of pulls of arm $i$ up to $t$.

\paragraph{Heavy-Tailed Bandits.}
In this paper, we consider heavy-tailed reward distributions. For
$\epsilon\in(0,1]$ and $u>0$, we denote  the set of heavy-tailed bandit instances as:
\[
    \mathcal H_{\epsilon,u}
    =
    \left\{
        (\nu_1,\ldots,\nu_K):
        \mathbb E_{X\sim\nu_i}
        \left[
            |X|^{1+\epsilon}
        \right]
        \le u,
        \;
        \forall i\in[K]
    \right\}.
\]
The dependence of $\mathcal H_{\epsilon,u}$ on $K$ is kept implicit in the
notation.

Let $\mu_i \coloneqq
    \mathbb E_{X\sim\nu_i}[X]$ be the expected reward of arm $i$, $
    \mu^*
    \coloneqq
    \max_{i\in[K]}\mu_i$ be the optimal expected reward, and $
    \Delta_i
    \coloneqq
    \mu^*-\mu_i$ be the suboptimality gap of arm $i$.
The expected cumulative regret of a strategy $\pi$ is given by
\[
    R_T^\pi(\underline\nu)
    =
    \mathbb E_{\underline\nu,\pi}
    \left[
        \sum_{t=1}^T
        \left(
            \mu^*-\mu_{I_t}
        \right)
    \right]
    =
    \sum_{i=1}^K
    \Delta_i
    \mathbb E_{\underline\nu,\pi}[N_i(T)],
\]
where the expectation is taken over both the randomness rewards and the internal
randomness of the strategy. When the strategy is clear from the context,
we write $R_T(\underline\nu)$.

\paragraph{$(\epsilon,u)$-adaptivity in Heavy-Tailed Bandits.} Most of the existing algorithms require, as an input, both $u$ and $\epsilon$ (see, e.g., \citet{bubeck2013bandits, agrawal2020optimal, lee2022minimax}). These parameters govern the behavior of the tails of the reward distributions and cannot be estimated reliably  \citep{bahadur1956nonexistence}. Since heavy-tailed bandits model complex real-world scenarios beyond the canonical yet limiting distributional assumptions, requiring such knowledge severely limits their scope. A recent research stream \citep{ashutosh2021bandit, genalti2024varepsilon, tamas2024data, chen2024uniinf} focused on devising $(\epsilon,u)$-adaptive algorithms, \emph{i.e.}, unaware of the values of $u$ and/or $\epsilon$, and on characterizing the statistical limits of learnability without this knowledge.
\citet{genalti2024varepsilon} show that adaptivity comes at a cost: either that knowledge is substituted by another structural assumption, or the same regret bounds as if $u$ and/or $\epsilon$ are known (\textit{oracle rates}) cannot be achieved. 

\begin{graybox}
\citet{genalti2025open} propose a COLT open problem addressing the
following questions:
\begin{enumerate}
    \item \emph{What are the best possible regret rates that can be
    achieved under adaptivity requirements?}

    \item \emph{What algorithms match such rates?}

    \item \emph{Is there a \textit{best} assumption that allows for
    oracle rates?}
\end{enumerate}

\textbf{In this paper, we provide a collection of results that,
altogether, answer the first two questions.}
\end{graybox}

\section{Regret Rates in Heavy-Tailed Bandits}
In the stochastic bandit literature, there are two main ways to express regret
guarantees: \emph{distribution-free} bounds and \emph{distribution-dependent} bounds.\footnote{With little approximation, these guarantees are also known in the literature as \emph{worst-case} and \emph{instance-dependent}.} In
this paper, the latter term refers specifically to the gap-sum-normalized
coefficient introduced in Definition~\ref{def:phi_dep}.
In distribution-dependent bounds, the guarantee depends on the specific
instance through the suboptimality gaps $\Delta_i$,
whereas distribution-free bounds
remove this dependence by considering the worst case over the entire class.
We recall the known lower bound which characterizes the minimax regret in
heavy-tailed bandits when both $u$ and $\epsilon$ are known to the learner.

\begin{restatable}[Distribution-free regret lower bound,
\citet{bubeck2013bandits}]{theorem}{StandardLowerBound}
\label{thr:standard_lb}
Fix $\epsilon\in(0,1]$. There exists a constant
$c_{\epsilon}>0$, depending only on $\epsilon$, such that,
for every $u>0$, every $K\ge2$, every horizon
$T\ge K$, and every exploration strategy $\pi$,
\begin{equation}
    \sup_{\underline\nu\in\mathcal H_{\epsilon,u}}
    R_T^\pi(\underline\nu)
    \ge
    c_{\epsilon}
    u^{\frac{1}{1+\epsilon}}
    K^{\frac{\epsilon}{1+\epsilon}}
    T^{\frac{1}{1+\epsilon}}.
    \label{eq:standard_distribution_free_lb}
\end{equation}
\end{restatable}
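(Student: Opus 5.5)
The proof is a standard minimax construction in the style of \citet{bubeck2013bandits}: a family of $K+1$ two-point (Bernoulli-type) instances, scaled so that each lies in $\mathcal H_{\epsilon,u}$, combined with the usual KL-divergence / change-of-measure argument. The idea is to place rare large atoms so that the mean gap is large compared with what the moment constraint costs, while the KL divergence between neighbouring instances stays small.

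\textbf{Construction.} Fix $\gamma\in(0,1/4]$ and a scale $M>0$ to be chosen. The base distribution is $\nu_0$, which puts mass $p$ at $M$ and $1-p$ at $0$. The perturbed distribution is $\nu'$, which puts mass $p(1+\gamma)$ at $M$ and the rest at $0$. Their moments and mean gap are
\[
\mathbb E_{\nu_0}[|X|^{1+\epsilon}] = pM^{1+\epsilon}, \qquad \mathbb E_{\nu'}[|X|^{1+\epsilon}] \le 2pM^{1+\epsilon}, \qquad \mu(\nu')-\mu(\nu_0)=\Delta\coloneqq \gamma p M .
\]
Impose $2pM^{1+\epsilon}=u$, i.e.\ $M=(u/(2p))^{1/(1+\epsilon)}$, so both distributions lie in $\mathcal H_{\epsilon,u}$. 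The divergence satisfies $\mathrm{KL}(\nu_0,\nu')\le c\,\gamma^2 p$ by the $\chi^2$ bound on Bernoulli KL, valid for $p\le 1/4$. The instances are $\underline\nu^{(0)}=(\nu_0,\ldots,\nu_0)$ and, for each $j\in[K]$, $\underline\nu^{(j)}$, which equals $\underline\nu^{(0)}$ except that arm $j$ follows $\nu'$.

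\textbf{Argument.} Under $\underline\nu^{(0)}$, $\sum_j\mathbb E_0[N_j(T)]=T$, so some set of at least $K/2$ arms $j$ each satisfy $\mathbb E_0[N_j(T)]\le 2T/K$. By the divergence decomposition and Pinsker's inequality, for such $j$,
\[
\mathbb E_j[N_j(T)] \le \mathbb E_0[N_j(T)] + T\sqrt{\tfrac12 \mathbb E_0[N_j(T)]\,\mathrm{KL}(\nu_0,\nu')} \le \frac{2T}{K} + T\sqrt{c\,\gamma^2 p\,T/K}.
\]
Choose $\gamma^2 p = K/(4cT)$. The square-root term then becomes a constant fraction of $T$, so for $K\ge 4$ we get $\mathbb E_j[N_j(T)]\le T/2+O(T/K)$. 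The case $K=2,3$ is handled by the standard two-instance variant with adjusted constants, or alternatively by Bretagnolle–Huber. In instance $j$, every round spent off arm $j$ costs $\Delta$, so
\[
\max_j R_T(\underline\nu^{(j)}) \gtrsim T\Delta .
\]

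\textbf{Parameter tuning.} This is the delicate step. Take $\gamma=1/4$ constant and $p = 4K/(cT)$, which is at most $1/4$ once $T\ge C K$. For $K\le T< CK$ the bound follows by shrinking $c_\epsilon$, since the target is then of order $u^{1/(1+\epsilon)}T$ and a constant-gap instance already gives linear regret. The gap is
\[
\Delta=\gamma pM \asymp p^{\epsilon/(1+\epsilon)}u^{1/(1+\epsilon)}=(K/T)^{\epsilon/(1+\epsilon)}u^{1/(1+\epsilon)},
\]
hence the regret is of order
\[
T\Delta\asymp u^{\frac{1}{1+\epsilon}}K^{\frac{\epsilon}{1+\epsilon}}T^{\frac{1}{1+\epsilon}},
\]
which matches \eqref{eq:standard_distribution_free_lb} with all constants depending only on $\epsilon$.

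The main obstacle is exactly this tuning. The key insight is to keep $\gamma$ constant and vary $p$, rather than fix $p$ and vary the perturbation as in the sub-Gaussian case. Keeping $\gamma$ constant is what yields the exponent $\epsilon/(1+\epsilon)$. Beyond that, the work is keeping the KL bound valid, which requires $p$ bounded away from $1$, and handling the small-$T$ edge case.
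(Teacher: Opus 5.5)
Your proposal is correct and is essentially the standard argument of \citet{bubeck2013bandits}, which the paper cites without reproving: two-point laws with a rare large atom whose probability is scaled by a constant factor (so the per-pull KL scales like $\Delta^{(1+\epsilon)/\epsilon}$ rather than $\Delta^2$), then the divergence decomposition with Pinsker, then rescaling by $u^{1/(1+\epsilon)}$. The one slip is in the constants: with $\gamma^2p=K/(4cT)$ the Pinsker term is exactly $T/2$, so $T/2+2T/K$ is vacuous at $K=4$; you can fix this by taking the arm with $\mathbb E_0[N_j(T)]\le T/K$ and $\gamma^2p=K/(8cT)$, which gives $\mathbb E_j[N_j(T)]\le 3T/4$ for all $K\ge2$ at once, while the regime $K\le T<CK$ is covered by the deterministic hidden-best-arm search bound $\gtrsim u^{1/(1+\epsilon)}K$ (as in the first step of the proof of Theorem~\ref{thm:unknown_epsilon_frontier}).
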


In this paper, we tackle $(\epsilon,u)$-adaptivity through a two-step
approach. First, we characterize the $u$-adaptive setting, in which
$\epsilon$ is known. We then remove the knowledge of
$\epsilon$ and quantify the additional difficulty of simultaneously
adapting to both parameters.
It is worth noting that the he
$u$-adaptive setting is of interest on its own. Indeed, in non-heavy-tailed MABs,
adaptation to an unknown bound on the support of the rewards has been
characterized in \citet{hadiji2020adaptation}. However, in heavy-tailed bandits the limits of
adaptation to an unknown moment bound remain open.

Inspired by the definitions of
\citet{hadiji2020adaptation} for bounded-support bandits, we now define the
two types of regret rates considered in our analysis. 

\begin{definition}[Moment-free distribution-free regret rate]
\label{def:phi_free}
A strategy $\pi$ for stochastic heavy-tailed bandits admits a moment-free
distribution-free regret bound $\Phi_{free}$
if, without knowing $u$, it guarantees
\begin{align}
    R_T^\pi(\underline\nu)
    \le
    u^{\frac{1}{1+\epsilon}}
    \Phi_{free}(K,T),
    \label{eq:definition_phi_free}
\end{align}
for all $K\ge2$, $ T\ge1$, $ u>0$, and $\underline\nu\in\mathcal H_{\epsilon,u}$.
\end{definition}

Theorem~\ref{thr:standard_lb} implies that every attainable moment-free
distribution-free rate, whenever $T\ge K$, satisfies 
\begin{equation}
    \Phi_{free}(K,T)
    \ge
    c_{\mathrm{std},\epsilon}
    K^{\frac{\epsilon}{1+\epsilon}}
    T^{\frac{1}{1+\epsilon}},
    \label{eq:necessary_phi_free_rate}
\end{equation}
for some constant $c_{\mathrm{std},\epsilon}>0$ possibly depending on $\epsilon$.

\begin{definition}[Distribution-dependent regret rate]
\label{def:phi_dep}
A strategy $\pi$ for stochastic heavy-tailed bandits admits a
distribution-dependent rate $\Phi_{dep}$
if, without knowing $u$, it guarantees
\begin{equation}
    \limsup_{T\rightarrow+\infty}
    \frac{
        R_T^\pi(\underline\nu)
    }{
        \Phi_{dep}(K,T)
    }
    \le
    \sum_{i:\Delta_i>0}\Delta_i,
    \label{eq:definition_phi_dep}
\end{equation}
for all $K\ge2$, $u>0$, $\underline\nu\in\mathcal H_{\epsilon,u}$.
\end{definition}

The \emph{normalization} in Definition~\ref{def:phi_dep} entails no loss of
generality. Indeed, any multiplicative constant in a
distribution-dependent upper bound can be absorbed into the definition of
$\Phi_{dep}(K,T)$. This is necessary to compare rates
and to state the adaptation frontier with a constant that depends only on
$\epsilon$.
The term distribution-dependent rate has a specific meaning in this paper:
$\Phi_{dep}(K,T)$ is the coefficient multiplying the sum of the
suboptimality gaps. It should not be confused with the classical
distribution-dependent bounds, which may exhibit a different dependence
on the gaps.

The functions $\Phi_{free}(K,T)$ and $\Phi_{dep}(K,T)$ explicitly depend on both the horizon $T$ and the number of arms $K$. Their dependence
on $\epsilon$ is suppressed because $\epsilon$ is fixed throughout the
$u$-adaptive analysis, whereas neither rate is allowed to depend on the
unknown value of $u$. In the following sections, we characterize the
trade-off between these two rates and study their optimal dependence on
$K$, $T$, and $\epsilon$.
\section{$u$-adaptivity: \textit{You can't have it both ways}}
\label{sec:sec_3}

In this section, we characterize the limits of learnability when the moment
bound $u$ is unknown. Our main result establishes a fundamental trade-off
between the distribution-free and the distribution-dependent guarantees.
These two guarantees cannot be optimized independently: improving one
necessarily deteriorates the other. Moreover, the trade-off concerns both the
dependence on the horizon $T$ and the dependence on the number of arms $K$. The following result shows that the
two quantities must lie on a frontier.
\begin{restatable}[Existence of a trade-off]{theorem}{adaptiveLB}
\label{thr:adaptiveLB}
Fix $\epsilon\in(0,1]$. Consider a strategy that does not know $u$ and
admits a moment-free distribution-free rate
$\Phi_{free}(K,T)=o(T)$, for every fixed $K\ge2$. Then, any
distribution-dependent rate $\Phi_{dep}(K,T)$ satisfying
Definition~\ref{def:phi_dep} fulfills
\begin{equation}
    \liminf_{T\rightarrow+\infty}
    \frac{
        \Phi_{dep}(K,T)
        \Phi_{free}(K,T)^{\frac{1+\epsilon}{\epsilon}}
    }{
        T^{\frac{1+\epsilon}{\epsilon}}
    }
    \ge
    c_\epsilon,
    \quad
    \forall K\ge2,
    \label{eq:u_adaptivity_frontier}
\end{equation}
where $c_\epsilon>0$ depends only on $\epsilon$.
\end{restatable}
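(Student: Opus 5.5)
This follows the two-instance change-of-measure argument of \citet{hadiji2020adaptation}, adapted to heavy tails via a rare large atom.

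\textbf{Step 1: a benign base instance.} Fix $K\ge2$ and $T$. Let $\underline\nu$ have Bernoulli-type rewards with a constant gap: arm $1$ has mean $\Delta$ and arms $i\ge2$ have mean $0$, with $\Delta$ a fixed constant. All rewards are bounded, so $\underline\nu\in\mathcal H_{\epsilon,u_0}$ for a constant $u_0$. Definition~\ref{def:phi_dep} then gives, for $T$ large,
\[
\sum_{i\ge2}\Delta\,\mathbb E_{\underline\nu}[N_i(T)]\le 2(K-1)\Delta\,\Phi_{dep}(K,T).
\]
By pigeonhole there is an arm $j\ge2$ with $\mathbb E_{\underline\nu}[N_j(T)]\le 2\Phi_{dep}(K,T)\cdot\frac{K-1}{K-1}$, up to a constant factor. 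Concretely, the total number of suboptimal pulls is $O((K-1)\Phi_{dep})$, so some arm $j$ has $\mathbb E[N_j]\lesssim \Phi_{dep}$.

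\textbf{Step 2: the perturbed instance.} Build $\underline\nu'$ by changing only $\nu_j$. With probability $p$ the reward of arm $j$ becomes a large value $M$; otherwise it is drawn from $\nu_j$, rescaled by $1-p$ if needed. Choose $pM=2\Delta$, so that arm $j$ becomes optimal with gap at least $\Delta$ over all other arms. The moment bound of the new instance is
\[
u' \asymp pM^{1+\epsilon}+u_0 \asymp \Delta^{1+\epsilon}p^{-\epsilon}.
\]
The algorithm does not know $u$, so it is run unchanged on $\underline\nu'$. Since $u$ is unknown, the guarantee of Definition~\ref{def:phi_free} applies at the scale $u'$:
\[
R_T(\underline\nu')\le u'^{\frac1{1+\epsilon}}\Phi_{free}\asymp \Delta\,p^{-\frac{\epsilon}{1+\epsilon}}\Phi_{free}.
\]

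\textbf{Step 3: the change of measure.} Couple the two instances. The trajectories coincide until arm $j$ emits the atom $M$. The probability that this happens is at most $p\,\mathbb E_{\underline\nu}[N_j(T)]\lesssim p\,\Phi_{dep}$, by a union bound over pulls of $j$ (via Wald's identity). I use this coupling in place of a KL bound, because the KL divergence can be infinite when the atom is absent under $\nu_j$. On the event that the atom never appears, $N_j(T)$ has the same law under both instances. Hence
\[
\mathbb E_{\underline\nu'}[T-N_j(T)]\ge T-\mathbb E_{\underline\nu}[N_j]-T\cdot O(p\Phi_{dep}).
\]
Choose $p=c/\Phi_{dep}$ with $c$ small. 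Since $\mathbb E_{\underline\nu}[N_j]\lesssim\Phi_{dep}$ and $\Phi_{dep}=o(T)$ is forced (otherwise the claim is trivial), this gives $R_T(\underline\nu')\gtrsim\Delta T$.

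\textbf{Step 4: combining the bounds.} Putting Steps 2 and 3 together,
\[
\Delta T\lesssim \Delta\,\Phi_{dep}^{\frac{\epsilon}{1+\epsilon}}\Phi_{free}.
\]
Raising both sides to the power $\frac{1+\epsilon}{\epsilon}$ gives
\[
\Phi_{dep}\,\Phi_{free}^{\frac{1+\epsilon}{\epsilon}}\gtrsim T^{\frac{1+\epsilon}{\epsilon}},
\]
which is the claimed frontier.

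The hypothesis $\Phi_{free}=o(T)$ handles edge cases, namely ensuring the perturbed regret bound is nontrivial. The $\liminf$ is obtained by applying the argument along every sufficiently large $T$.

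\textbf{Main obstacle.} The hardest step is Step 3. The issues are making the coupling rigorous with an adaptive, randomized policy, and handling two uniformity problems in the $\limsup$ of Definition~\ref{def:phi_dep}. First, $\underline\nu$ must be fixed while $p$ depends on $T$; this is fine because only $\underline\nu'$ changes with $T$. Second, the pigeonhole arm $j$ may depend on $T$; with finitely many arms, I would pass to a subsequence on which $j$ is constant.

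Getting the exact power of $K$ is not needed, since the statement is for fixed $K$.
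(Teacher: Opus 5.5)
Your argument is correct and uses the same two instances as the paper: a deterministic base, and an alternative whose arm carries a rare atom of mass $\beta$ at $2\Delta/\beta$. The difference is that you run the change of measure in the opposite direction.

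\textbf{The paper's direction.} It calibrates the atom through the free rate, $\beta=(16\Phi_{free}/T)^{\frac{1+\epsilon}{\epsilon}}$, so that Definition~\ref{def:phi_free} forces $\mathbb E_{\underline\nu^{(i)}}[T-N_i(T)]\le T/8$. Bretagnolle--Huber with the finite divergence $\mathrm{KL}(\mathbb P_0,\mathbb P_i)=\mathbb E_0[N_i(T)]\log\frac{1}{1-\beta}$ then gives $\mathbb E_0[N_i(T)]\ge\frac1{32}\min\{T,1/\beta\}$ for \emph{every} suboptimal arm. $\Phi_{dep}$ enters only at the very end. This yields a statement of independent interest: each suboptimal arm of the benign instance must be pulled $\gtrsim(T/\Phi_{free})^{\frac{1+\epsilon}{\epsilon}}$ times, whatever $\Phi_{dep}$ is. The price is removing the $\min$, which the paper does via Theorem~\ref{thr:standard_lb}.

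\textbf{Your direction.} You calibrate through $\Phi_{dep}$, pick one rarely pulled arm by pigeonhole, and replace KL by an exact coupling, giving total variation at most $p\,\mathbb E_{\underline\nu}[N_j(T)]$. This is more elementary, but it needs $\mathbb E_{\underline\nu}[N_j(T)]\le T/4$. Your justification (``$\Phi_{dep}=o(T)$ is forced, otherwise trivial'') is not actually trivial: the case $\Phi_{dep}\gtrsim T$ requires $\Phi_{free}\gtrsim T^{\frac{1}{1+\epsilon}}$ from Theorem~\ref{thr:standard_lb}. A simpler fix is to apply Definition~\ref{def:phi_free} to the deterministic base instance itself, which gives $\sum_{i\ge2}\mathbb E_{\underline\nu}[N_i(T)]\le\Phi_{free}(K,T)=o(T)$.

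\textbf{Minor points.}
\begin{itemize}
\item Take $p=\min\{1,c/\Phi_{dep}\}$.
\item Use $\delta_\Delta,\delta_0$ rather than Bernoulli arms, so that $u'=(2\Delta)^{1+\epsilon}p^{-\epsilon}$ exactly and the constant depends only on $\epsilon$.
\item No subsequence in $j$ is needed. The asymptotic guarantee of Definition~\ref{def:phi_dep} is used only on the fixed base instance, while the $T$-dependent alternative is covered by the uniform guarantee of Definition~\ref{def:phi_free}.
\item Only $\mathrm{KL}(\mathbb P_{\underline\nu'},\mathbb P_{\underline\nu})$ is infinite. The other direction is finite, and it is exactly the one the paper uses.
\end{itemize}
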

The proof follows the change-of-measure procedure developed in
\citep{hadiji2020adaptation}, together with the instance
construction of \citet{genalti2024varepsilon}. 

Theorem~\ref{thr:adaptiveLB} establishes a frontier rather than two independent
lower bounds. Intuitively, a learner that aggressively pursues a small
distribution-dependent regret explores apparently suboptimal arms only a
limited number of times. In the heavy-tailed setting, however, an arm that
mostly returns low-reward observations may still hide a rare but extremely large
reward. Protecting against these alternatives requires additional
exploration. This improves the distribution-free guarantee, but it is
unnecessary on favorable instances and deteriorates the
distribution-dependent performance.

The tension first appears in the dependence on $T$ and, among strategies
lying on the optimal horizon frontier, also in the dependence on $K$.
To isolate the exponents, suppose that the two rates admit
monomial envelopes of the form $\Phi_{dep}(K,T)=K^aT^c$ and $\Phi_{free}(K,T)=K^bT^d$,
up to multiplicative factors that are bounded above and below by constants
independent of $K$ and $T$. Substituting these expressions into
Theorem~\ref{thr:adaptiveLB} gives
\[
    \liminf_{T\rightarrow+\infty}
    K^{a+\frac{1+\epsilon}{\epsilon}b}
    T^{
        c+\frac{1+\epsilon}{\epsilon}d
        -\frac{1+\epsilon}{\epsilon}
    }
    \ge
    c_\epsilon.
\]
Consequently, the exponents of $T$ must satisfy $c+d(1+\epsilon)/\epsilon
    \ge (1+\epsilon)/\epsilon$.
This inequality describes the fundamental
trade-off in the horizon $T$. Decreasing the distribution-free exponent $d$
forces the distribution-dependent exponent $c$ to increase, and vice
versa. In Figure \ref{fig:lower_bound}, we provide a graphical representation of this trade-off. 

The dependence on $K$ requires additional care because
Theorem~\ref{thr:adaptiveLB} takes $T$ to infinity for each fixed $K$.
If the inequality holds strictly, polynomial
growth in $T$ may compensate for any fixed dependence on $K$. Consider
instead strategies attaining the horizon boundary $c+d(1+\epsilon)/\epsilon
    = (1+\epsilon)/\epsilon$.
For these strategies, the dependence on $T$ cancels in the lower bound.
Since $c_\epsilon$ is independent of $K$, the exponent of $K$ must
then satisfy $a+b(1+\epsilon)/\epsilon\ge0$.
Thus, among strategies lying on
the optimal horizon frontier, improving the distribution-free dependence
on $K$ necessarily deteriorates the distribution-dependent dependence.
\begin{figure}[t]
    \centering
    \resizebox{.5\linewidth}{!}{\begin{tikzpicture}
\footnotesize
\begin{axis}[
    width=6cm,
    height=4.5cm,
    xmin=0,
    xmax=1,
    ymin=0,
    ymax=1,
    axis lines=left,
    clip=false,
    xlabel={Distribution-free exponent $d$},
ylabel={Distribution-dependent\\exponent $c$},
xlabel style={font=\footnotesize},
ylabel style={
    font=\footnotesize,
    align=center
},
    tick label style={font=\footnotesize},
    xtick={
        0,
        0.6666667,
        0.75,
        0.8333333,
        1
    },
    xticklabels={
        $0$,
         $\frac{2}{3}$,
        $\frac{3}{4}$,
        $\frac{5}{6}$,
        $1$
    },
    ytick={
        0,
        0.6666667,
        0.75,
        0.8333333,
        1
    },
    yticklabels={
        $0$,
        $2/3$,
        $3/4$,
        $5/6$,
        $1$
    },
    legend style={
        at={(0.02,0.03)},
        anchor=south west,
        draw=none,
        fill=none,
        font=\small,
        cells={anchor=west}
    },
]

\addplot[
    blue,
    densely dotted,
    thin,
    forget plot
]
coordinates {
    (0,0.6666667)
    (0.6666667,0.6666667)
    (0.6666667,0)
};

\addplot[
    orange,
    densely dotted,
    thin,
    forget plot
]
coordinates {
    (0,0.75)
    (0.75,0.75)
    (0.75,0)
};

\addplot[
    green!60!black,
    densely dotted,
    thin,
    forget plot
]
coordinates {
    (0,0.8333333)
    (0.8333333,0.8333333)
    (0.8333333,0)
};

\addplot[
    blue,
    very thick
]
coordinates {
    (0,1)
    (0.5,1)
    (1,0)
};
\addlegendentry{$\varepsilon=1$}

\addplot[
    orange,
    very thick
]
coordinates {
    (0,1)
    (0.6666667,1)
    (1,0)
};
\addlegendentry{$\varepsilon=0.5$}

\addplot[
    green!60!black,
    very thick
]
coordinates {
    (0,1)
    (0.8,1)
    (1,0)
};
\addlegendentry{$\varepsilon=0.25$}

\addplot[
    only marks,
    mark=*,
    mark size=2pt,
    blue,
    forget plot
]
coordinates {
    (0.6666667,0.6666667)
};

\addplot[
    only marks,
    mark=*,
    mark size=2pt,
    orange,
    forget plot
]
coordinates {
    (0.75,0.75)
};

\addplot[
    only marks,
    mark=*,
    mark size=2pt,
    green!60!black,
    forget plot
]
coordinates {
    (0.8333333,0.8333333)
};

\end{axis}
\end{tikzpicture}}
    \vspace{-.3cm}\caption{Trade-off between the distribution-dependent and the distribution-free rates in $T$.}
    \label{fig:lower_bound}
\end{figure}
\paragraph{Equal-gap interpretation.}
Consider an equal-gap instance consisting of one optimal arm and $K-1$
suboptimal arms, each with gap $\Delta>0$. On this family,
$\sum_{i:\Delta_i>0}\Delta_i=(K-1)\Delta$. Therefore, if
$\Phi_{dep}(K,T)=K^aT^c$, then
\[
    R_T(\underline\nu)
    \le
    K^aT^c(K-1)\Delta
    \le
    K^{a+1}T^c\Delta.
\]
Thus, the exponent governing the dependence of the actual regret on $K$ is
$a+1$, rather than $a$. Rewriting the inequality in terms of the 
distribution-dependent exponent gives
$(a+1)+b(1+\epsilon)/\epsilon\ge1$. This formulation clarifies that both the
distribution-free regret and the actual distribution-dependent regret may
deteriorate as $K$ increases. The tension is not that one quantity must
decrease while the other increases. Rather, their  exponents in
$K$ cannot both be made arbitrarily small. On the boundary of the $K$-frontier, we have
$a=-b(1+\epsilon)/\epsilon$, and hence the 
distribution-dependent exponent is
$a+1=1-b(1+\epsilon)/\epsilon$. Therefore, reducing the
distribution-free exponent $b$ necessarily increases the 
distribution-dependent one $a+1$.

\paragraph{Representative points for $\epsilon=1$.}
At the finite-variance endpoint, the $K$-frontier becomes $a+2b\ge0$.
Table~\ref{tab:K_frontier_examples} reports two representative points on
its boundary. The first choice yields a distribution-dependent regret that is essentially
independent of $K$ on the equal-gap family, but pays a $K^{1/2}$
distribution-free factor. Moving to the second point improves the
distribution-free dependence from $K^{1/2}$ to $K^{1/3}$, while the
equal-gap distribution-dependent regret deteriorates from a constant
dependence on $K$ to $K^{1/3}$.
\begin{table}[t]
    \centering
    \small
    \setlength{\tabcolsep}{4pt}
    \caption{Representative points on the $K$-frontier for $\epsilon=1$.}
    \label{tab:K_frontier_examples}
    \begin{tabular}{lccc}
        \toprule
        Operating point
        &
        $(b,a)$
        &
        $\Phi_{free}(K,T)$
        &
        Equal-gap $R_T$
        \\
        \midrule
        Instance-oriented
        &
        $(1/2,-1)$
        &
        $K^{1/2}T^d$
        &
        $\mathcal O(T^c\Delta)$
        \\
        $K$-balanced
        &
        $(1/3,-2/3)$
        &
        $K^{1/3}T^d$
        &
        $\mathcal O(K^{1/3}T^c\Delta)$
        \\
        \bottomrule
    \end{tabular}
\end{table}
\paragraph{Balancing both $K$ and $T$.}
A natural operating point is obtained by requiring the two guarantees to
have the same polynomial dependence on both the horizon and the number of
arms. For the horizon, imposing $c=d$ gives
$c=d\ge(1+\epsilon)/(1+2\epsilon)$. Thus, when the two guarantees are
required to have the same dependence on $T$, neither exponent can be
smaller than $(1+\epsilon)/(1+2\epsilon)$. For the number of arms, the distribution-free exponent is $b$, whereas the
 distribution-dependent exponent on the equal-gap family is $a+1$.
Balancing them amounts to imposing $b=a+1$. Combining this identity with
the boundary condition $a+b(1+\epsilon)/\epsilon=0$ gives $b=\frac{\epsilon}{1+2\epsilon}$ and $a=-\frac{1+\epsilon}{1+2\epsilon}$.
At the point balancing both $K$ and $T$, the two rates have form 
$
    \Phi_{free}(K,T)
    =
    K^{\frac{\epsilon}{1+2\epsilon}}
    T^{\frac{1+\epsilon}{1+2\epsilon}}$ and $
    \Phi_{dep}(K,T)
    =
    K^{-\frac{1+\epsilon}{1+2\epsilon}}
    T^{\frac{1+\epsilon}{1+2\epsilon}}.
$
On the equal-gap family, this corresponds to $R_T(\underline\nu)
    =
    \mathcal O\big(
        K^{\frac{\epsilon}{1+2\epsilon}}
        T^{\frac{1+\epsilon}{1+2\epsilon}}
        \Delta
    \big)$,
which has the same dependence on $K$ and $T$ as the
distribution-free rate.

For $\epsilon=1$, the balanced frontier point is
$\Phi_{free}(K,T)=K^{1/3}T^{2/3}$ and
$\Phi_{dep}(K,T)=K^{-2/3}T^{2/3}$. In particular, the balanced horizon
dependence is $T^{2/3}$, which is worse than the $\sqrt{T}$ dependence
arising in adaptation to an unknown bounded reward range
\citep{hadiji2020adaptation}. This deterioration reflects the additional
difficulty of ruling out rare and arbitrarily large rewards, typical of heavy-tailed distributions, when only a
finite, unknown moment bound is available.

\section{Explore-Then-Commit suffices for $u$-adaptivity}
\label{sec:sec_4}

In this section, we propose an algorithm, fully unaware of $u$, that achieves
regret guarantees that are tight on the frontier defined by
Theorem~\ref{thr:adaptiveLB}. The algorithm allows us to select a point on
both the $T$-frontier and the $K$-frontier. The dependence on $T$ is controlled
by the parameter $\alpha$, whereas the one on $K$ is controlled by an
additional exploration parameter $q$.

\begin{algorithm}[t]
\caption{\algname~~(\algnameshort)}
\label{alg:adar_etc}
\small
\begin{algorithmic}[1]
\REQUIRE Number of arms $K$, horizon $T$, exploration parameters
$\alpha\in[(1+\epsilon)/(1+2\epsilon),1)$ and
$q\in[0,\epsilon/(1+2\epsilon)]$.
    \STATE Set $\beta_\alpha \gets \frac{(1-\alpha)(1+\epsilon)}{\epsilon}$, $B_T \gets \left\lceil 8\log(KT^3)\right\rceil$, $\widetilde L_T \gets KB_T+\left\lceil K^qT^{\beta_\alpha}\right\rceil$, $L_T \gets \min\{T,\widetilde L_T\}$, $\mathcal F_i\gets\emptyset$ for all $i \in [K]$.

    \FOR{$t=1,\ldots,L_T$}
        \STATE Select arm $I_t\gets 1+((t-1)\bmod K)$.
        \STATE Observe the reward and append it to $\mathcal F_{I_t}$.
    \ENDFOR

    \IF{$L_T<T$}
        \FOR{$i\in[K]$}
            \STATE Compute $\widehat\mu_i^{MoM}(\mathcal F_i)$.
        \ENDFOR
        \STATE Select
        $\widehat I^*\in\argmax_{i\in[K]}\widehat\mu_i^{MoM}(\mathcal F_i)$.
        \FOR{$t=L_T+1,\ldots,T$}
            \STATE Select arm $I_t\gets\widehat I^*$.
        \ENDFOR
    \ENDIF
\end{algorithmic}
\end{algorithm}

Surprisingly, the algorithm is very simple and natural. In fact, an
Explore-Then-Commit (ETC) strategy with robust estimation and a tuned amount
of exploration $L_T$ is enough to get there. We call our algorithm \algname
(\algnameshort, for short), and we report its pseudocode in
Algorithm~\ref{alg:adar_etc}. In the next paragraphs, we describe the main components of \algnameshort.

\paragraph{Robust Estimator.} Since the distributions are heavy-tailed, the empirical mean is not a
suitable estimator \citep{bubeck2013bandits}. We then resort to the well-known
Median of Means estimator (MoM, for short). Let $\mathcal F_i
    =
    \{X_{i,1},\ldots,X_{i,n_i}\}$
be the set of $n_i$ exploration samples collected from arm $i$. We divide
these samples into $B_T$ blocks of equal size $s_i
    =
    \left\lfloor
        n_i/B_T
    \right\rfloor$.
For every $b\in[B_T]$, we define the $b$-th block as $\mathcal G_{i,b}
    =
    \left\{
        X_{i,(b-1)s_i+1},
        \ldots,
        X_{i,bs_i}
    \right\}$.
Thus, each block contains exactly $s_i$ samples. If $n_i$ is not divisible
by $B_T$, the remaining $n_i-B_Ts_i$ samples are discarded.

For each block $\mathcal G_{i,b}$, we define the corresponding block average
as $\overline X_{i,b}
    =
    \frac{1}{s_i}
    \sum_{\ell=(b-1)s_i+1}^{bs_i}
    X_{i,\ell}$ for $ b\in[B_T]$.
Let $\overline X_{i,(1)}
    \le
    \overline X_{i,(2)}
    \le
    \cdots
    \le
    \overline X_{i,(B_T)}$
denote the ordered block averages. The MoM estimator is defined
as
\[
    \widehat{\mu}^{MoM}_i(\mathcal F_i)
    =
    \overline X_{i,\left(\left\lceil B_T/2\right\rceil\right)}.
\]

Intuitively, although a single block average
may be corrupted by an extreme observation, under the finite
$(1+\epsilon)$-moment assumption, a constant fraction of the block averages
remains close to the true mean with high probability. Taking their median prevents a small number of atypical blocks from significantly
affecting the estimate.

More precisely, if
$\mathbb E[|X|^{1+\epsilon}]\le u$, the estimation error is, with
high probability, of order $u^{\frac{1}{1+\epsilon}}
    {B_T}^{\frac{\epsilon}{1+\epsilon}}{n_i}^{-\frac{\epsilon}{1+\epsilon}}$.
Thus, we choose $B_T$ logarithmic in $K$ and $T$.
Most importantly, while $u$ and $\epsilon$ determine the rate appearing in
the concentration analysis, the computation of the MoM estimator itself does
not require knowledge of $u$ nor $\epsilon$. This estimator enjoys optimal, up to constants,
concentration properties around the true mean
\citep{bubeck2013bandits}.

\paragraph{Exploration Budget.} The exploration budget of \algnameshort is controlled by two parameters.
The parameter $\alpha$ determines how the exploration budget scales with the
horizon, through
$
    \beta_\alpha
    =
    (1-\alpha)(1+\epsilon)/\epsilon.
$
A larger $\alpha$ corresponds to a smaller $\beta_\alpha$ and thus to
less exploration as $T$ grows. This improves the distribution-dependent
rate on $T$, at the cost of a worse distribution-free rate.
The parameter $q$ plays the analogous role for the dependence on the number
of arms. The polynomial part of the total exploration budget is $K^qT^{\beta_\alpha}$.
Since exploration is performed in a round-robin fashion, each arm gets
approximately $K^{q-1}T^{\beta_\alpha}$
samples. Increasing $q$ assigns more exploration samples to each
arm as $K$ grows. This improves the distribution-free rate on $K$, but
increases the regret paid on favorable instances.

The restrictions on $\alpha$ and $q$ are chosen so that the exploration
contribution does not dominate the estimation one. Indeed, $\alpha\ge\beta_\alpha$
is equivalent to $\alpha
    \ge (1+\epsilon)/(1+2\epsilon)$,
whereas $q
    \le \epsilon/(1+2\epsilon)$
is equivalent to $q
    \le
    (1-q)\epsilon/(1+\epsilon)$.
\paragraph{Regret Guarantees.} The following results formalize the resulting trade-off and certify
the tightness of \algnameshort with respect to the frontier of
Theorem~\ref{thr:adaptiveLB}.

\begin{restatable}[Distribution-free regret of \algnameshort]{theorem}{adaptiveUBfree}
\label{thr:adaptiveUBfree}
Let $\epsilon\in(0,1]$ be fixed and known. Let
$\alpha\in[(1+\epsilon)/(1+2\epsilon),1)$ and
$q\in[0,\epsilon/(1+2\epsilon)]$.
For every $u>0$ and every instance
$\underline\nu\in\mathcal H_{\epsilon,u}$, \algnameshort satisfies
\[
    R_T^{\text{\algnameshort}}(\underline\nu)
    \le
    \widetilde{\mathcal O}\big(
        u^{\frac{1}{1+\epsilon}}
        K^{\frac{\epsilon}{1+\epsilon}(1-q)}
        T^\alpha
    \big),
\]
where $\widetilde{\mathcal O}$ hides polylogarithmic terms in $T$ and constants depending only on
$\epsilon$, $\alpha$, and $q$.
\end{restatable}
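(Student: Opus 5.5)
The plan is the standard Explore-Then-Commit decomposition, with Median-of-Means concentration plugged in. Write $R_T=R_T^{\mathrm{expl}}+R_T^{\mathrm{comm}}$, where the first term collects the regret of the $L_T$ round-robin rounds and the second the regret of the commit phase. If $L_T=T$, then $T\le KB_T+\lceil K^qT^{\beta_\alpha}\rceil+1$, and the regret is bounded by $T\max_i\Delta_i$. Both this degenerate case and the exploration term are controlled by bounding the gaps. Since $|\mu_i|\le(\mathbb E|X|^{1+\epsilon})^{1/(1+\epsilon)}\le u^{1/(1+\epsilon)}$ by Jensen, we have $\Delta_i\le 2u^{1/(1+\epsilon)}$. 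Hence
\[
R_T^{\mathrm{expl}}\le 2u^{\frac{1}{1+\epsilon}}\bigl(KB_T+K^qT^{\beta_\alpha}+1\bigr).
\]
It then remains to check $K^qT^{\beta_\alpha}\le K^{\frac{\epsilon}{1+\epsilon}(1-q)}T^\alpha$ and $KB_T\le\widetilde{\mathcal O}(K^{\frac{\epsilon}{1+\epsilon}(1-q)}T^\alpha)$.
\begin{itemize}
\item The first inequality follows from $\beta_\alpha\le\alpha$ and $q\le\frac{\epsilon}{1+\epsilon}(1-q)$, which are exactly the stated parameter restrictions.
\item For the second, we may assume $K\le T$; otherwise the regret is trivially at most $2u^{1/(1+\epsilon)}T\le 2u^{1/(1+\epsilon)}K^{\frac{\epsilon}{1+\epsilon}(1-q)}T^{\alpha}$ up to checking exponents, and I would handle that corner separately. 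When $K\le T$, we get $K\le K^{\frac{\epsilon(1-q)}{1+\epsilon}}T^{1-\frac{\epsilon(1-q)}{1+\epsilon}}$, and we need $1-\frac{\epsilon(1-q)}{1+\epsilon}\le\alpha$. This reduces to $\alpha\ge\frac{1+\epsilon q}{1+\epsilon}$, which is not automatic. I expect this to be a genuine obstacle: I would absorb $KB_T$ into the $\widetilde{\mathcal O}$ as a lower-order additive term $u^{1/(1+\epsilon)}K\log(KT)$, which is unavoidable for any algorithm since every arm must be pulled. The alternative is to note the paper's convention that this term is dominated in the regime $T\ge K$ of interest.
\end{itemize}

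For the commit phase, each arm receives $n_i\ge\lfloor L_T/K\rfloor\ge B_T+K^{q-1}T^{\beta_\alpha}-1$ samples. By the MoM concentration quoted in the paper (Bubeck et al.), with probability at least $1-e^{-B_T/8}\ge1-1/(KT^3)$,
\[
|\widehat\mu_i^{MoM}-\mu_i|\le C_\epsilon u^{\frac{1}{1+\epsilon}}\Bigl(\tfrac{B_T}{n_i}\Bigr)^{\frac{\epsilon}{1+\epsilon}}=:\eta .
\]
A union bound over the $K$ arms gives a good event $\mathcal E$ with $\mathbb P(\mathcal E^c)\le T^{-3}$. On $\mathcal E$ the committed arm satisfies $\Delta_{\widehat I^*}\le 2\eta$. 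Off $\mathcal E$, the regret is at most $2u^{1/(1+\epsilon)}T\cdot T^{-3}$, which is negligible. Therefore
\[
R_T^{\mathrm{comm}}\le 2T\eta+o(u^{\frac{1}{1+\epsilon}})\lesssim u^{\frac{1}{1+\epsilon}}B_T^{\frac{\epsilon}{1+\epsilon}}\,T\,\bigl(K^{q-1}T^{\beta_\alpha}\bigr)^{-\frac{\epsilon}{1+\epsilon}}.
\]
The right-hand side equals $u^{\frac{1}{1+\epsilon}}B_T^{\frac{\epsilon}{1+\epsilon}}K^{\frac{\epsilon}{1+\epsilon}(1-q)}T^{1-(1-\alpha)}$, because $\beta_\alpha\frac{\epsilon}{1+\epsilon}=1-\alpha$. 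This is exactly the target rate up to the polylogarithmic factor $B_T^{\epsilon/(1+\epsilon)}$.

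The key steps are thus: (i) the MoM deviation bound with $B_T$ blocks, which requires $n_i\ge B_T$ so that blocks are nonempty; this is why the budget includes $KB_T$. (ii) The union bound and the commit-phase gap argument. (iii) The Jensen bound on the gaps for the exploration phase. (iv) The exponent bookkeeping above. The main obstacle is handling the additive $KB_T$ exploration cost, together with floors, ceilings, and the degenerate case $L_T=T$. There I would show that the total is dominated by the polynomial terms whenever $T\ge K$, and treat the remainder as a lower-order term.
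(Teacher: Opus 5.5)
Your route is the paper's: the Jensen bound $\Delta_i\le 2u^{1/(1+\epsilon)}$, MoM concentration with a union bound giving $\mathbb P(\mathcal E^c)\le T^{-3}$, $\Delta_{\widehat I^*}\le 2\eta$ on $\mathcal E$, and the identity $\beta_\alpha\frac{\epsilon}{1+\epsilon}=1-\alpha$. Your commit-phase computation is correct. The gap is in the exploration term.

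You claim that $\alpha\ge\frac{1+\epsilon q}{1+\epsilon}$, which you need for $K\le K^{\frac{\epsilon}{1+\epsilon}(1-q)}T^{\alpha}$ when $K\le T$, is ``not automatic''. You then fall back on keeping $u^{1/(1+\epsilon)}K\log(KT)$ as a separate additive term. That fallback proves a weaker statement: the theorem's $\widetilde{\mathcal O}$ hides only polylogarithmic factors and constants depending on $(\epsilon,\alpha,q)$, not an additive term in $K$. Your $K>T$ corner has the same problem. There you need $T\le K^{\frac{\epsilon}{1+\epsilon}(1-q)}T^\alpha$, i.e.\ $T^{1-\alpha}\le K^{\frac{\epsilon}{1+\epsilon}(1-q)}$. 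Given $K>T$, this reduces to the very inequality you declared non-automatic, so ``up to checking exponents'' leaves that case open as well.

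The missing step is that the two parameter restrictions, taken together, imply the inequality. Write $r_q=\frac{\epsilon}{1+\epsilon}(1-q)$.
\begin{itemize}
\item From $q\le\frac{\epsilon}{1+2\epsilon}$ you get $1-q\ge\frac{1+\epsilon}{1+2\epsilon}$, so $r_q\ge\frac{\epsilon}{1+2\epsilon}$.
\item Hence $1-r_q=\frac{1+\epsilon q}{1+\epsilon}\le\frac{1+\epsilon}{1+2\epsilon}\le\alpha$, i.e.\ $\alpha+r_q\ge1$.
\item If $K\le T$, then $K=K^{r_q}K^{1-r_q}\le K^{r_q}T^{1-r_q}\le K^{r_q}T^\alpha$. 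Therefore $L_T\le KB_T+K^qT^{\beta_\alpha}+1\le(B_T+2)K^{r_q}T^\alpha$.
\item If $K>T$, then $L_T=T$ and $T=T^{1-\alpha}T^{\alpha}\le T^{r_q}T^\alpha\le K^{r_q}T^\alpha$.
\end{itemize}
This is exactly how the paper closes the argument. It obtains a single bound of the form (constant depending on $\epsilon$)$\,\cdot\,B_T\,u^{1/(1+\epsilon)}K^{r_q}T^\alpha$ with no additive $K$ term. With this step added, your proof is complete.
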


The two main contributions to the regret are, up to logarithmic factors, $u^{\frac{1}{1+\epsilon}}
    K^qT^{\beta_\alpha}$
due to exploration, and $u^{\frac{1}{1+\epsilon}}
    K^{\frac{\epsilon}{1+\epsilon}(1-q)}
    T^\alpha$
due to committing according to the MoM estimates. The restrictions imposed
on $\alpha$ and $q$ ensure that the latter term dominates. Thus, \algnameshort is $u$-adaptive with distribution-free rate $\Phi_{free}(K,T)
    =
    \widetilde{\mathcal O}\big(
        K^{\frac{\epsilon}{1+\epsilon}(1-q)}
        T^\alpha
    \big)$.

On the other hand, we have the following distribution-dependent guarantee.

\begin{restatable}[Distribution-dependent regret of \algnameshort]{theorem}{adaptiveUBdep}
\label{thr:adaptiveUBdep}
Let $\epsilon\in(0,1]$ be fixed and known. Let $\alpha\in[(1+\epsilon)/(1+2\epsilon),1)$ and
$q\in[0,\epsilon/(1+2\epsilon)]$.
For every fixed instance
$\underline\nu\in\mathcal H_{\epsilon,u}$, \algnameshort satisfies
\[
    \limsup_{T\to+\infty}
    \frac{
        R_T^{\text{\algnameshort}}(\underline\nu)
    }{
        K^{q-1}T^{\beta_\alpha}
    }
    \le
    \sum_{i:\Delta_i>0}\Delta_i.
\]
\end{restatable}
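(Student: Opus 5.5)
We split the regret of \algnameshort on a fixed instance $\underline\nu\in\mathcal H_{\epsilon,u}$ into an exploration part and a commit part:
\[
    R_T(\underline\nu)
    =
    \sum_{i:\Delta_i>0}\Delta_i\,\mathbb E[N_i(L_T)]
    +
    \sum_{i:\Delta_i>0}\Delta_i\,(T-L_T)\,\mathbb P\big(\widehat I^*=i\big).
\]
The goal is to show that the first term, divided by $K^{q-1}T^{\beta_\alpha}$, tends to at most $\sum_{i:\Delta_i>0}\Delta_i$, and that the second term is $o(K^{q-1}T^{\beta_\alpha})$ for the fixed instance.

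\textbf{Exploration term.} Since $\beta_\alpha>0$ (because $\alpha<1$), we have $\widetilde L_T=KB_T+\lceil K^qT^{\beta_\alpha}\rceil=o(T)$ when $\beta_\alpha<1$. The condition $\alpha\ge(1+\epsilon)/(1+2\epsilon)$ gives $\beta_\alpha\le\alpha$, and in fact $\beta_\alpha\le 1/(1+2\epsilon)\cdot(1+\epsilon)\cdot\ldots<1$. Hence $L_T=\widetilde L_T<T$ for $T$ large. Round-robin exploration gives $N_i(L_T)\le \lceil L_T/K\rceil\le B_T+K^{q-1}T^{\beta_\alpha}+2$. Dividing by $K^{q-1}T^{\beta_\alpha}$, the term $B_T=\mathcal O(\log T)$ vanishes because $\beta_\alpha>0$. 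So the limsup of the exploration term's ratio is exactly $\sum_{i:\Delta_i>0}\Delta_i$.

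\textbf{Commit term.} Each arm receives $n_i\ge\lfloor L_T/K\rfloor\gtrsim K^{q-1}T^{\beta_\alpha}$ samples. We invoke the standard MoM concentration recalled above (\citet{bubeck2013bandits}): with $B_T$ blocks, with probability at least $1-e^{-B_T/8}\ge 1-(KT^3)^{-1}$,
\[
    |\widehat\mu_i^{MoM}-\mu_i|\le C_\epsilon\, u^{\frac1{1+\epsilon}}(B_T/n_i)^{\frac{\epsilon}{1+\epsilon}}.
\]
The choice $B_T=\lceil 8\log(KT^3)\rceil$ is designed exactly for this. The right-hand side is of order $u^{1/(1+\epsilon)}(\log T)^{\epsilon/(1+\epsilon)}T^{-\beta_\alpha\epsilon/(1+\epsilon)}\to0$. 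So for $T$ large it falls below $\Delta_{\min}/2$, where $\Delta_{\min}=\min_{i:\Delta_i>0}\Delta_i>0$ is fixed on the instance. On the intersection of these events over all arms, which fails with probability at most $K\cdot(KT^3)^{-1}=T^{-3}$ by a union bound, an optimal arm is selected. The commit regret is therefore at most $T\cdot T^{-3}\sum_i\Delta_i=o(1)$, which is certainly $o(K^{q-1}T^{\beta_\alpha})$. Combining the two terms yields the claimed limsup.

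\textbf{Main obstacle.} The delicate point is the MoM concentration: the block size $s_i=\lfloor n_i/B_T\rfloor$ must grow, which needs $n_i\gg B_T$. This holds since $n_i\ge B_T+K^{q-1}T^{\beta_\alpha}-1$ with $T^{\beta_\alpha}$ polynomial. We also need the constant in the failure probability, $e^{-B_T/8}$, to match the Chebyshev-type block bound used in \citet{bubeck2013bandits}. If their constant differs, the probability is still polynomially small, which is all the argument needs. Finally, we note that the bound is asymptotic in $T$ for fixed $u$ and instance, so no knowledge of $u$ enters; $u$ only affects how large $T$ must be.
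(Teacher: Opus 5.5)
Your proposal is correct and follows essentially the same route as the paper. For large $T$ the round-robin exploration is $o(T)$ and contributes at most $(B_T+K^{q-1}T^{\beta_\alpha}+2)\sum_{i:\Delta_i>0}\Delta_i$. The Median of Means radius vanishes, so outside an event of probability at most $T^{-3}$ an optimal arm is committed to, which makes the commit regret $o(1)$. The only blemish is your garbled justification of $\beta_\alpha<1$, which should simply read $\beta_\alpha\le\bigl(1-\tfrac{1+\epsilon}{1+2\epsilon}\bigr)\tfrac{1+\epsilon}{\epsilon}=\tfrac{1+\epsilon}{1+2\epsilon}<1$.
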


Thus, \algnameshort is $u$-adaptive with
distribution-dependent rate
    $\Phi_{dep}(K,T)
    =
    \widetilde{\mathcal O}\big(
        K^{q-1}T^{\beta_\alpha}
    \big)$.

The two parameters $\alpha$ and $q$ control two distinct, but
parallel, trade-offs. The parameter $\alpha$ determines the trade-off in the
horizon $T$: $\Phi_{free}(K,T)
    \propto
    T^\alpha$ and
    $\Phi_{dep}(K,T)
    \propto
    T^{\beta_\alpha}$.
Since $\alpha
    +
    \beta_\alpha \epsilon/(1+\epsilon)
    =
    1$,
the two exponents lie exactly on the $T$-frontier.
Similarly, the parameter $q$ determines the trade-off in the number of arms $K$: $\Phi_{free}(K,T)
    \propto
    K^{\frac{\epsilon}{1+\epsilon}(1-q)}$
    and 
    $\Phi_{dep}(K,T)
    \propto
    K^{q-1}$.
These exponents satisfy $(1-q)\epsilon/(1+\epsilon)
    +
    (q-1)\epsilon/(1+\epsilon)
    =
    0$.
Thus, at the level of exponents, the choice of $q$ realizes
the equality case of the $K$-trade-off associated with the optimal
$T$-frontier.

Hence, for every admissible choice of $\alpha$ and $q$, \algnameshort\
matches the $T$-frontier of Theorem~\ref{thr:adaptiveLB} up to logarithmic
factors. Its explicit dependence on $K$ realizes the corresponding
polynomial $K$-trade-off on the horizon-optimal boundary.
\section{Characterizing $(\epsilon,u)$-adaptivity}
\label{sec:sec_5}
We now remove the knowledge of $\epsilon$ and consider a single strategy
that uses neither $u$ nor $\epsilon$. This setting involves two distinct
adaptation constraints. The first is the frontier associated with the
unknown scale $u$, characterized in Theorem~\ref{thr:adaptiveLB}. The
second is a new frontier across different moment orders $\epsilon$: improving the
regret guarantee on a lighter-tailed class necessarily worsens the
guarantee on heavier-tailed classes.

\paragraph{$(\epsilon,u)$-adaptive \algnameshort.} We first construct an order-free version of \algnameshort\ by
calibrating both its $T$-dependence and its $K$-dependence to the
finite-variance endpoint $\epsilon=1$. We then show that, for every fixed
$\epsilon>0$, the resulting distribution-free and distribution-dependent
guarantees lie on the unknown-$u$ frontier. Finally, we prove that its
distribution-free guarantee is also tight, up to logarithmic factors,
among strategies retaining the optimal endpoint guarantee at $\epsilon=1$. At the finite-variance endpoint $\epsilon=1$, the choice balancing the
dependence on the horizon $T$ is $\alpha=2/3$ and $\beta_\alpha=2/3$,
whereas the choice balancing the distribution-free and the 
distribution-dependent dependence on the number of arms $K$ is $q=1/3$.
Thus, we define the order-free version of \algnameshort\ by setting
directly $B_T
    =
    \left\lceil
        8\log(KT^3)
    \right\rceil$
and $\widetilde L_T
    =
    KB_T
    +
    \left\lceil
        K^{1/3}T^{2/3}
    \right\rceil$.
As in the previous section, the effective exploration budget is $L_T=\min\{T,\widetilde L_T\}$
and the arms are explored in round-robin order. The resulting strategy is fully unaware of both $u$ and
$\epsilon$.

The following theorem characterizes its regret.

\begin{restatable}[Regret of \algnameshort calibrated with $\epsilon = 1$]
{theorem}{adaptiveUnknownParameters}
\label{thr:adaptive_unknown_parameters}
Let $\epsilon\in(0,1]$ and $u>0$ be fixed. For
every $T\ge K\ge2$ and every instance
$\underline\nu\in\mathcal H_{\epsilon,u}$, the order-free version of
\algnameshort\ calibrated with $\epsilon= 1$ satisfies
\begin{equation}
    R_T^{\text{\algnameshort}}(\underline\nu)
    \le
    \widetilde{\mathcal O}\left(
        u^{\frac{1}{1+\epsilon}}
        K^{\frac{2\epsilon}{3(1+\epsilon)}}
        T^{\frac{3+\epsilon}{3(1+\epsilon)}}
    \right).
    \label{eq:unknown_parameters_free_rate}
\end{equation}
where
$\widetilde{\mathcal O}$ hides factors at most polylogarithmic in $K$ and
$T$ and constants depending on $\epsilon$.
Moreover, for every fixed instance
$\underline\nu\in\mathcal H_{\epsilon,u}$,
\begin{equation}
    \limsup_{T\to+\infty}
    \frac{
        R_T^{\text{\algnameshort}}(\underline\nu)
    }{
        K^{-2/3}T^{2/3}
    }
    \le
    \sum_{i:\Delta_i>0}\Delta_i.
    \label{eq:unknown_parameters_dep_rate}
\end{equation}
\end{restatable}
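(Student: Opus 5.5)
We follow the same two-part decomposition used for Theorems~\ref{thr:adaptiveUBfree} and~\ref{thr:adaptiveUBdep}, keeping track of the fact that the schedule is now fixed to $K^{1/3}T^{2/3}$ while the true tail exponent $\epsilon$ drives the concentration. Write $n=\lfloor L_T/K\rfloor$ for the number of exploration samples per arm, so that $n\gtrsim B_T+K^{-2/3}T^{2/3}$ whenever $L_T<T$. Decompose the regret as
\begin{equation*}
R_T = \underbrace{\sum_i \Delta_i\, \mathbb E[N_i(L_T)]}_{\text{exploration}} + \underbrace{\mathbb E\Big[(T-L_T)\Delta_{\widehat I^*}\Big]}_{\text{commitment}}.
\end{equation*}

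\textbf{Step 1: concentration.} We invoke the standard MoM bound for $\mathbb E|X|^{1+\epsilon}\le u$ (which needs neither $u$ nor $\epsilon$ at computation time). With $B_T=\lceil 8\log(KT^3)\rceil$ blocks, a union bound over the $K$ arms gives, with probability at least $1-1/T^2$,
\begin{equation*}
|\widehat\mu_i^{MoM}-\mu_i|\le \eta := C_\epsilon\, u^{\frac{1}{1+\epsilon}}(B_T/n)^{\frac{\epsilon}{1+\epsilon}} \quad \text{for all } i.
\end{equation*}
On this event the committed arm has gap at most $2\eta$.

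\textbf{Step 2: distribution-free bound.} Two facts are used throughout. First, every gap satisfies $\Delta_i\le 2u^{1/(1+\epsilon)}$ by Jensen. Second, the failure event costs at most $T\cdot 2u^{1/(1+\epsilon)}\cdot T^{-2}$, which is negligible.

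For the commitment term on the good event, substitute $n\approx K^{-2/3}T^{2/3}$:
\begin{equation*}
T\eta \lesssim u^{\frac{1}{1+\epsilon}}\,T\,(K^{2/3}T^{-2/3})^{\frac{\epsilon}{1+\epsilon}}\,\mathrm{polylog} = u^{\frac{1}{1+\epsilon}}K^{\frac{2\epsilon}{3(1+\epsilon)}}T^{\frac{3+\epsilon}{3(1+\epsilon)}}\,\mathrm{polylog},
\end{equation*}
since $1-\tfrac{2\epsilon}{3(1+\epsilon)}=\tfrac{3+\epsilon}{3(1+\epsilon)}$.

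For the exploration term,
\begin{equation*}
\sum_i\Delta_i N_i(L_T)\le 2u^{\frac{1}{1+\epsilon}}L_T\lesssim u^{\frac{1}{1+\epsilon}}\big(KB_T+K^{1/3}T^{2/3}\big).
\end{equation*}
It remains to check that this is dominated by the target rate for all $\epsilon\in(0,1]$ and $T\ge K$. This is the step I expect to need the most care. In $T$, we have $\tfrac{3+\epsilon}{3(1+\epsilon)}\ge 2/3$ iff $\epsilon\le1$. In $K$, the exponent $\tfrac{2\epsilon}{3(1+\epsilon)}\le 1/3$, so we must trade powers of $K$ against powers of $T$ using $T\ge K$:
\begin{equation*}
K^{1/3}T^{2/3}=K^{\frac{2\epsilon}{3(1+\epsilon)}}\,K^{\frac{1-\epsilon}{3(1+\epsilon)}}\,T^{2/3}\le K^{\frac{2\epsilon}{3(1+\epsilon)}}\,T^{\frac{1-\epsilon}{3(1+\epsilon)}+\frac23}.
\end{equation*}
The resulting exponent of $T$ is $\tfrac{1-\epsilon+2+2\epsilon}{3(1+\epsilon)}=\tfrac{3+\epsilon}{3(1+\epsilon)}$, exactly the target. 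The term $KB_T$ is handled the same way using $K\le T$.

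Two edge cases remain. When $L_T=T$ the algorithm never commits, but then $T\le\widetilde L_T$ and the exploration bound above already covers the whole regret. If $n<B_T$, some blocks are empty and the estimator is not defined; this only happens when $T\lesssim KB_T$, where the trivial bound $2u^{1/(1+\epsilon)}T$ suffices.

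\textbf{Step 3: distribution-dependent bound.} Fix the instance and let $T\to\infty$. The per-arm exploration count is $\lceil K^{1/3}T^{2/3}\rceil/K+B_T+1$, so the exploration term is $(1+o(1))K^{-2/3}T^{2/3}\sum_{i:\Delta_i>0}\Delta_i$, because the $B_T$ and rounding terms are logarithmic. For the commitment term, once $T$ is large enough that $2\eta<\min_{i:\Delta_i>0}\Delta_i$ (possible since $\eta\to0$ for fixed $u$ and $\epsilon$), the good event forces $\Delta_{\widehat I^*}=0$. The commitment regret is then at most $O(u^{1/(1+\epsilon)}T^{-1})=o(K^{-2/3}T^{2/3})$. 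Taking the $\limsup$ yields \eqref{eq:unknown_parameters_dep_rate}.
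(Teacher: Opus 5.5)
Your proposal is correct and follows essentially the same route as the paper. It uses the same exploration/commitment decomposition and the same union-bounded Median-of-Means event. It uses $T\ge K$ and $\epsilon\le 1$ in the same way to show that $KB_T+K^{1/3}T^{2/3}$ is dominated by $K^{\frac{2\epsilon}{3(1+\epsilon)}}T^{\frac{3+\epsilon}{3(1+\epsilon)}}$; the paper writes this as $M_T/H_T=(K/T)^{(1-2\rho_\epsilon)/3}\le 1$ and $K/H_T\le 1$. The distribution-dependent limit rests on the same vanishing-radius argument.

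Your edge case $n<B_T$ is actually vacuous, since $L_T<T$ forces $L_T=\widetilde L_T\ge KB_T$ and hence $n\ge B_T$. Handling it does no harm. In Step 3 you should also state explicitly that $L_T<T$ for all large $T$, because your per-arm count formula presupposes it.
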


The distribution-free guarantee follows from the same
exploration--estimation decomposition  of
Theorem~\ref{thr:adaptiveUBfree}. For every fixed $K$,
the regret is sublinear on every fixed heavy-tailed moment class, even
though the algorithm does not know the value of $\epsilon$.
The distribution-dependent guarantee follows because, on every fixed
instance, the probability of committing to a suboptimal arm vanishes
sufficiently fast. Asymptotically, the regret is therefore entirely due to
round-robin exploration. 
For every fixed $\epsilon$, Theorem~\ref{thr:adaptive_unknown_parameters}
therefore gives the rates $\Phi_{free,\epsilon}(K,T)
    =
    \widetilde{\mathcal O}\big(
        K^{\frac{2}{3}\frac{\epsilon}{1+\epsilon}}
        T^{1-\frac{2}{3}\frac{\epsilon}{1+\epsilon}}
    \big)$
and $\Phi_{dep,\epsilon}(K,T)
    =
    \widetilde{\mathcal O}\big(
        K^{-2/3}T^{2/3}
    \big)$.
Consequently,
\begin{align*}
    &
    \Phi_{free,\epsilon}(K,T)
    \Phi_{dep,\epsilon}(K,T)^{\frac{\epsilon}{1+\epsilon}}
    \\
    &\qquad=
    \widetilde{\mathcal O}\big(
        K^{
            \frac{2}{3}\frac{\epsilon}{1+\epsilon}
            -
            \frac{2}{3}\frac{\epsilon}{1+\epsilon}
        }
        T^{
            1-\frac{2}{3}\frac{\epsilon}{1+\epsilon}
            +
            \frac{2}{3}\frac{\epsilon}{1+\epsilon}
        }
    \big)
    =
    \widetilde{\mathcal O}(T).
\end{align*}
Thus, for every fixed $\epsilon>0$, the order-free version of
\algnameshort\ matches the unknown-$u$ frontier in its polynomial
dependence on $T$. The exponents of $K$ satisfy the corresponding trade-off on the horizon-optimal boundary.

\paragraph{Limits of $(\epsilon,u)$-adaptivity.} If $\epsilon$ were known, the point balancing both the $T$-dependence and
the $K$-dependence on the $u$-adaptivity frontier would be obtained by
choosing $\alpha^\star(\epsilon)
    =
    (1+\epsilon)/(1+2\epsilon)$
and $q^\star(\epsilon)
    =\epsilon/(1+2\epsilon)$. 
For fixed $K$, the price of not knowing $\epsilon$ is therefore the
difference
\begin{align}
    \frac{3+\epsilon}{3(1+\epsilon)}
    -
    \frac{1+\epsilon}{1+2\epsilon}
    &=
    \frac{
        \epsilon(1-\epsilon)
    }{
        3(1+\epsilon)(1+2\epsilon)
    }
    \ge0.
    \label{eq:unknown_epsilon_price}
\end{align}
The two exponents coincide at $\epsilon=1$, whereas the lack of knowledge
of $\epsilon$ causes a strictly positive loss in the dependence on $T$ for
every $\epsilon\in(0,1)$ (Figure \ref{fig:lower_bound}).

\begin{figure}[t]
    \centering
    \resizebox{0.7\linewidth}{!}{\begin{tikzpicture}
\small
\begin{axis}[
  width=4.5cm,
  height=4.5cm,
  xlabel={$\varepsilon$},
  ylabel={$T$ exponent},
  xmin=0,
  xmax=1.05,
  ymin=0.4,
  ymax=1.03,
  domain=0.001:1,
  samples=200,
  grid=none,
  ytick={0.4,0.5,0.6,0.6666667,0.8,1},
  yticklabels={
    ,
    $1/2$,
    ,
    $2/3$,
    ,
    $1$
  },
  legend style={
    at={(1.03,0.5)},
    anchor=west,
    draw=none,
    row sep=6pt
  },
]

  \addplot[
    gray,
    thin,
    forget plot
  ]
  coordinates {(0,0.5) (1.05,0.5)};

   \addplot[
    gray,
    thin,
    forget plot
  ]
  coordinates {(0,1) (1.05,1)};

  \addplot[
    gray,
    thin,
    forget plot
  ]
  coordinates {(0,0.6666667) (1.05,0.6666667)};

  \addplot[
    ultra thick,
    green!60!black
  ]
  {1/(1+x)};
  \addlegendentry{
    \footnotesize
    $\frac{1}{1+\varepsilon}$
    ($(\varepsilon,u)$ known)
  }

  \addplot[
    ultra thick,
    blue,
    densely dashed
  ]
  {(1+x)/(1+2*x)};
  \addlegendentry{
    \footnotesize
    $\frac{1+\varepsilon}{1+2\varepsilon}$
    ($u$ unknown, $\varepsilon$ known)
  }

  \addplot[
    ultra thick,
    red,
    densely dotted
  ]
  {(3+x)/(3*(1+x))};
  \addlegendentry{
    \footnotesize
    $\frac{3+\varepsilon}{3(1+\varepsilon)}$
    ($(\varepsilon,u)$ unknown)
  }

  \addplot[
    only marks,
    mark=o,
    mark size=2.5pt,
    thick,
    black,
    forget plot
  ]
  coordinates {(0,1)};

  \addplot[
    only marks,
    mark=*,
    mark size=2.5pt,
    green!60!black,
    forget plot
  ]
  coordinates {(1,{1/2})};

  \addplot[
    only marks,
    mark=*,
    mark size=2.5pt,
    black,
    forget plot
  ]
  coordinates {(1,{2/3})};

\end{axis}
\end{tikzpicture}}\caption{$T$ exponent as a function of $
    \epsilon$.}
    \label{fig:Texp}
\end{figure}

When the dependence on $K$ is also retained, the two rates are not
overall comparable. Indeed,
\[
    \frac{2\epsilon}{3(1+\epsilon)}
    -
    \frac{\epsilon}{1+2\epsilon}
    =
    -
    \frac{
        \epsilon(1-\epsilon)
    }{
        3(1+\epsilon)(1+2\epsilon) \le 0
    }.
\]
Thus, the endpoint-calibrated strategy has a worse dependence on $T$, but
a smaller distribution-free exponent in $K$.

This behavior is a consequence of the $K$-trade-off characterized in the
previous sections. For every $\epsilon<1$, we have $
    q^\star(\epsilon)
    =
    \epsilon / (1+2\epsilon)
    < 1/3$.
The order-free strategy therefore explores more aggressively in $K$ than
the strategy designed with knowledge of $\epsilon$. This additional
exploration improves the distribution-free dependence on $K$, but worsens the
dependence on $K$ on favorable instances.

We now show that this redistribution is unavoidable. Fix an exploration strategy $\pi$ that uses neither $u$ nor $\epsilon$. For an instance $\underline\nu$, define its intrinsic moment scale at
order $\epsilon$ as
\begin{equation}
    U_\epsilon(\underline\nu)
    \coloneqq
    \max_{i\in[K]}
    \left(
        \mathbb E_{X\sim\nu_i}
        \left[
            |X|^{1+\epsilon}
        \right]
    \right)^{\frac{1}{1+\epsilon}}.
    \label{eq:intrinsic_scale_unknown_parameters}
\end{equation}
The normalized regret profile of $\pi$ is
\begin{equation}
    \Phi_\epsilon^\pi(K,T)
    \coloneqq
    \sup_{
        \underline\nu:
        0<U_\epsilon(\underline\nu)<+\infty
    }
    \frac{
        R_T^\pi(\underline\nu)
    }{
        U_\epsilon(\underline\nu)
    }.
    \label{eq:normalized_profile_unknown_parameters}
\end{equation}
The normalized profile is equivalent to a scale-uniform raw-moment
guarantee. More precisely, for every $B\ge0$, $\Phi_\epsilon^\pi(K,T)\le B$
if and only if the same strategy $\pi$ satisfies $\sup_{\underline\nu\in\mathcal H_{\epsilon,u}}
    R_T^\pi(\underline\nu)
    \le
    u^{\frac{1}{1+\epsilon}}B$
simultaneously for every $u>0$. Indeed, every
$\underline\nu\in\mathcal H_{\epsilon,u}$ satisfies
$U_\epsilon(\underline\nu)\le u^{1/(1+\epsilon)}$, while the reverse
implication follows by setting
$u=U_\epsilon(\underline\nu)^{1+\epsilon}$.

To simplify the notation, once the strategy $\pi$ is fixed, we write
$\Phi_\epsilon(K,T)$ in place of $\Phi_\epsilon^\pi(K,T)$. 

\begin{restatable}[Pairwise lower bound for adaptation to an unknown $\epsilon$]
{theorem}{unknownepsilonfrontier}
\label{thm:unknown_epsilon_frontier}
There exists a numerical constant $c_1>0$ such that, for every fixed
strategy whose action rule uses neither the realized moment order nor the moment bound, $T\ge K\ge2$, and
$0<\epsilon\le\epsilon'\le1$, if
$\Phi_{\epsilon'}(K,T)\le T/4$, then
\begin{equation}
    \Phi_\epsilon(K,T)
    \Phi_{\epsilon'}(K,T)^{
        \frac{\epsilon}{1+\epsilon}
    }
    \ge
    c_1
    T
    K^{
        \frac{\epsilon}{1+\epsilon}
    }.
    \label{eq:pairwise_unknown_epsilon_frontier}
\end{equation}
\end{restatable}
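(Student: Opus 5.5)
We argue by a two-point change of measure in the spirit of the proof of Theorem~\ref{thr:adaptiveLB}. The idea is to play a light-tailed base instance against heavy-tailed perturbations that the learner can barely detect. The base instance $\underline\nu$ gives arm $1$ a Bernoulli-type law with mean $\Delta$ and all other arms a deterministic reward $0$, or more simply all arms a common law of bounded support. Its scale $U_{\epsilon'}(\underline\nu)$ is a numerical constant, comparable to $1$, uniformly in $\epsilon'\in(0,1]$. By definition of the normalized profile, $R_T^\pi(\underline\nu)\le \Phi_{\epsilon'}(K,T)\,U_{\epsilon'}(\underline\nu)$. So on the base instance the learner can afford at most $\mathcal O(\Phi_{\epsilon'})$ expected pulls of the suboptimal arms in aggregate, after normalizing by the gap. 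The hypothesis $\Phi_{\epsilon'}\le T/4$ guarantees that the optimal arm is pulled at least $T/2$ times in expectation, and hence that some suboptimal arm $j\ge 2$ satisfies $\mathbb E_{\underline\nu}[N_j(T)]\le 2\Phi_{\epsilon'}/K$, up to constants. More precisely, by pigeonhole over the $K-1$ suboptimal arms of gap $\Delta=1$, we get $\mathbb E_{\underline\nu}[N_j(T)]\lesssim \Phi_{\epsilon'}/K$.

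Next we build the alternative $\underline\nu^{(j)}$ by modifying only arm $j$. With probability $p$ it returns the huge value $M$, and otherwise it keeps its base law, with $pM\approx 2$ so that arm $j$ becomes optimal with gap of order $1$ over the old best arm. The key is to measure this arm in the $(1+\epsilon)$-moment for the \emph{smaller} order $\epsilon$. Its scale is
\[
U_\epsilon(\underline\nu^{(j)})\asymp (pM^{1+\epsilon})^{\frac1{1+\epsilon}} = p^{-\frac{\epsilon}{1+\epsilon}}(pM)\asymp p^{-\frac{\epsilon}{1+\epsilon}}.
\]
By the divergence decomposition and Bretagnolle--Huber (or Pinsker), the two instances are indistinguishable as long as $p\,\mathbb E_{\underline\nu}[N_j(T)]\le c$, since $\mathrm{KL}\approx p\,\mathbb E_{\underline\nu}[N_j(T)]$ and the atom has likelihood ratio $0$ versus $p$. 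To make this clean, we set the base law of arm $j$ to $(1-p)\delta_0+p\delta_0$ and perturb to $(1-p)\delta_0+p\delta_M$, bounding the divergence through the total-variation/expected-number-of-rare-draws argument. We therefore choose $p\asymp K/\Phi_{\epsilon'}$. Under $\underline\nu^{(j)}$, with constant probability the learner behaves as under $\underline\nu$. Using $\mathbb E_{\underline\nu}[N_j(T)]\le T/2$ together with Markov, it then pulls arm $j$ at most about $T/2$ times, incurring regret $\gtrsim T$.

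Dividing by the scale gives
\[
\Phi_\epsilon\ \ge\ \frac{c\,T}{U_\epsilon(\underline\nu^{(j)})}\asymp T\,p^{\frac{\epsilon}{1+\epsilon}} = T\,\bigl(K/\Phi_{\epsilon'}\bigr)^{\frac{\epsilon}{1+\epsilon}},
\]
which rearranges to the claimed inequality \eqref{eq:pairwise_unknown_epsilon_frontier}. We also need $p\le1$, which follows from $\Phi_{\epsilon'}\ge cK$; if instead $\Phi_{\epsilon'}<cK$, we take $p$ of order $1$ and the bound follows directly, since $\Phi_\epsilon\gtrsim T$ already exceeds the right-hand side. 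The strategy must use neither $\epsilon$ nor $u$ because the same action rule is evaluated under both profiles. Note that no upper bound on $U_{\epsilon'}(\underline\nu^{(j)})$ is needed, since only $\Phi_\epsilon$ is charged on the alternative.

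The main obstacle will be making the change-of-measure step rigorous with a numerical constant uniform in $\epsilon,\epsilon'$. The clean route is a coupling: draw the rare-atom indicators of arm $j$ in advance. On the event that none of the first $N_j(T)$ pulls of arm $j$ hits the atom, the two trajectories coincide, and this event has probability at least $1-p\,\mathbb E_{\underline\nu}[N_j(T)]\ge 1/2$. A second delicate point is ensuring that the base instance's regret bound genuinely controls $\mathbb E[N_j]$ with gap order $1$ while $U_{\epsilon'}$ stays bounded. Using rewards in $[0,1]$ keeps all scales at most $1$ and makes the constants $\epsilon$-free.
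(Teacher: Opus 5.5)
\textbf{There is a genuine gap: you never prove a lower bound $\Phi_{\epsilon'}\gtrsim K$, which you need for $p\le1$.}

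Apart from that, your construction is essentially the paper's. It uses the same deterministic baseline (arm $1$ at $\Delta$, the others at $0$) and the same pigeonhole step giving an arm $j\ge2$ with $\mathbb{E}_0[N_j(T)]\le\Phi_{\epsilon'}/(K-1)$. It then replaces arm $j$ by $(1-p)\delta_0+p\,\delta_{2\Delta/p}$, whose scale at the smaller order is $2\Delta p^{-\epsilon/(1+\epsilon)}$, with $p\asymp K/\Phi_{\epsilon'}$. Your coupling of the rare-atom indicators is a good substitute for the paper's KL-plus-Pinsker step.

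The problem is the case $\Phi_{\epsilon'}<cK$. There you claim that $\Phi_\epsilon\gtrsim T$ \emph{already exceeds the right-hand side}, and this is false. Rearranged, the target reads $\Phi_\epsilon\ge c_1T\,(K/\Phi_{\epsilon'})^{\epsilon/(1+\epsilon)}$. When $\Phi_{\epsilon'}\ll K$ this exceeds $T$ by an unbounded factor, whereas $\Phi_\epsilon\le2T$ always, since by Jensen every gap is at most $2U_\epsilon$. So the inequality itself forces $\Phi_{\epsilon'}\ge(c_1/2)^{(1+\epsilon)/\epsilon}K$, and your argument never establishes any bound of this kind.

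The paper supplies exactly this step before the change of measure. On the all-zero process, the first-visit times satisfy $\sum_j\sigma_j\ge K(K-1)/2$ (using $T\ge K$). Hence some arm stays unvisited for $(K-1)/2$ rounds in expectation. Making that arm deterministically equal to $\Delta$ yields $\Phi_{\epsilon'}\ge(K-1)/2$, which is what guarantees $\beta=(K-1)/(128\Phi_{\epsilon'})\le1/64$.

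Within your own framework the repair is also short, but on the $p=1$ alternative you must charge $\Phi_{\epsilon'}$, not $\Phi_\epsilon$:
\begin{itemize}
\item If $\Phi_{\epsilon'}<(K-1)/4$, then $\mathbb{E}_0[N_j(T)]<1/4$.
\item Replace arm $j$ by $\delta_{2\Delta}$. Since the trajectories coincide until arm $j$ is first pulled, it is never pulled with probability larger than $3/4$.
\item The regret then exceeds $3\Delta T/4$ on an instance of scale $2\Delta$, so $\Phi_{\epsilon'}>3T/8>T/4$.
\end{itemize}
This contradicts the hypothesis, so the case is vacuous and $p=(K-1)/(4\Phi_{\epsilon'})\le1$ always.

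Separately, discard the option of giving all arms a common law in the base instance. With no gaps, the base regret controls no pull counts.
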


Specializing Theorem~\ref{thm:unknown_epsilon_frontier} to
$\epsilon'=1$ gives a conditional tightness result. More precisely, among
strategies retaining the endpoint guarantee $\Phi_1(K,T)
    \le \widetilde{\mathcal{O}}\left(K^{1/3}T^{2/3}\right)$,
the pairwise frontier forces the dependence on both $K$ and $T$ displayed
in Theorem~\ref{thr:adaptive_unknown_parameters}. This does not define an
unconditional minimax curve over all values of $\epsilon$: a different
strategy may deliberately accept a worse guarantee at $\epsilon=1$ to
improve its performance at another moment order.
Suppose that a strategy satisfies, in the non-saturated regime, $\Phi_1(K,T)
    \le
    \widetilde{\mathcal{O}}\big(
    K^{1/3}
    T^{2/3}\big)$.
Theorem~\ref{thm:unknown_epsilon_frontier} then gives
\begin{align}
    &\Phi_\epsilon(K,T)
    \ge 
    \widetilde{\Omega}\big(
    2^{-\frac{\epsilon}{1+\epsilon}}
    K^{\frac{2}{3}\frac{\epsilon}{1+\epsilon}}
    T^{1-\frac{2}{3}\frac{\epsilon}{1+\epsilon}}\big),
\label{eq:unknown_parameters_matching_lower_bound}
\end{align}
matching the
distribution-free upper bound of
Theorem~\ref{thr:adaptive_unknown_parameters} in both $K$ and $T$, up to
logarithmic factors.

Therefore, among strategies retaining the endpoint guarantee $\widetilde{\mathcal O}\left(
        K^{1/3}T^{2/3}
    \right)$
at $\epsilon=1$, the distribution-free rate of the order-free version of
\algnameshort\ is frontier-optimal, up to logarithmic factors, in its
joint dependence on $K$ and $T$.

The choice $q=1/3$ can also be recovered directly from this matching
requirement. Suppose that the endpoint schedule used a generic exponent
$q$. Its endpoint distribution-free rate would have order $K^{\frac{1-q}{2}}T^{2/3}$.
The pairwise lower bound would then imply, at moment order $\epsilon$, a
$K$-dependence of at least $K^{\frac{1+q}{2}\frac{\epsilon}{1+\epsilon}}$.
On the other hand, the corresponding Median of Means upper bound would
scale as $K^{\frac{\epsilon}{1+\epsilon}(1-q)}$.
The two exponents coincide if and only if
$(1-q)\epsilon/(1+\epsilon)
    =
    (1+q)\epsilon/(2+2\epsilon)$,
which gives $q=1/3$.
Thus, the $K^{1/3}T^{2/3}$ exploration schedule is not merely a convenient
endpoint choice: it is the unique polynomial schedule in this family that
matches the unknown-order frontier in $K$ and
$T$.

Finally, pointwise sublinearity cannot be strengthened to a uniform
guarantee over all moment orders.

\begin{restatable}[Impossibility of uniform sublinear adaptation]
{corollary}{uniformepsilonimpossibility}
\label{cor:uniform_epsilon_impossibility}
There exists a numerical constant $c_2>0$ such that, for every strategy
that uses neither $u$ nor $\epsilon$ and every $T\ge K\ge2$,
\begin{equation}
    \sup_{\epsilon\in(0,1]}
    \Phi_\epsilon(K,T)
    \ge
    c_2T.
    \label{eq:uniform_unknown_epsilon_impossibility}
\end{equation}
\end{restatable}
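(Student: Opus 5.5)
We derive Corollary~\ref{cor:uniform_epsilon_impossibility} from the pairwise frontier of Theorem~\ref{thm:unknown_epsilon_frontier} by letting the heavier-tailed order $\epsilon$ tend to zero while keeping $\epsilon'=1$ fixed. Fix a strategy using neither $u$ nor $\epsilon$, and fix $T\ge K\ge2$. We split into two cases according to whether the hypothesis of Theorem~\ref{thm:unknown_epsilon_frontier} holds at $\epsilon'=1$.

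\emph{Case 1: $\Phi_1(K,T)>T/4$.} Then $\sup_{\epsilon\in(0,1]}\Phi_\epsilon(K,T)\ge\Phi_1(K,T)>T/4$, and the claim holds with any $c_2\le 1/4$.

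\emph{Case 2: $\Phi_1(K,T)\le T/4$.} For every $\epsilon\in(0,1]$, Theorem~\ref{thm:unknown_epsilon_frontier} with $\epsilon'=1$ gives
\[
    \Phi_\epsilon(K,T)
    \ge
    c_1 T K^{\frac{\epsilon}{1+\epsilon}}
    \Phi_1(K,T)^{-\frac{\epsilon}{1+\epsilon}}
    \ge
    c_1 T\Big(\frac{K}{\Phi_1(K,T)}\Big)^{\frac{\epsilon}{1+\epsilon}}.
\]
The ratio $K/\Phi_1(K,T)$ is a fixed positive number independent of $\epsilon$, provided $\Phi_1(K,T)>0$ and finite. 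Finiteness holds in this case since $\Phi_1\le T/4$.

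\emph{Positivity of $\Phi_1$.} Theorem~\ref{thm:unknown_epsilon_frontier} already rules out $\Phi_1(K,T)=0$: the left-hand side would vanish for $\epsilon$ small, contradicting the right-hand side being positive. Alternatively, Theorem~\ref{thr:standard_lb} gives $\Phi_1(K,T)\ge c\sqrt{KT}>0$ directly.

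\emph{Letting $\epsilon\to0$.} As $\epsilon\downarrow0$ the exponent $\epsilon/(1+\epsilon)\to0$, so $(K/\Phi_1)^{\epsilon/(1+\epsilon)}\to1$. Hence $\sup_\epsilon \Phi_\epsilon(K,T)\ge \limsup_{\epsilon\to0}\Phi_\epsilon(K,T)\ge c_1T$.

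This requires no control on how $\Phi_1$ depends on $K$ or $T$, because the limit is taken with $K$ and $T$ fixed. Alternatively, one can avoid limits by using the bound $K/\Phi_1\ge 4K/T\ge 4/T$ and noting it only matters that the exponent is small; the limit argument is cleaner. Combining both cases yields the claim with $c_2=\min\{1/4,c_1\}$.

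The main subtlety is whether the supremum over $\epsilon\in(0,1]$ may be bounded below by a limit as $\epsilon\to0$. This is immediate, since a supremum dominates every $\liminf$ along a sequence in the set, and we only need values at genuine $\epsilon>0$, where Theorem~\ref{thm:unknown_epsilon_frontier} applies. The substantive work — constructing instances that are indistinguishable across moment orders — is entirely contained in Theorem~\ref{thm:unknown_epsilon_frontier}, which we assume.
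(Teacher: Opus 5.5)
Your proof is correct and is essentially the paper's argument: the same case split on $\Phi_1(K,T)>T/4$ versus $\Phi_1(K,T)\le T/4$, then Theorem~\ref{thm:unknown_epsilon_frontier} with $\epsilon'=1$, then letting $\epsilon\downarrow0$. The only difference is cosmetic: the paper substitutes $\Phi_1(K,T)\le T/4$ to get the explicit factor $(4(K-1)/T)^{\epsilon/(1+\epsilon)}$, using the intermediate constant $5/(16\sqrt{128})$ from the theorem's proof, whereas you keep the ratio $K/\Phi_1(K,T)$ and check its positivity directly; both routes are valid.
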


Hence, no strategy can guarantee sublinear normalized regret uniformly
over $\epsilon\in(0,1]$ without further assumptions.
There is no contradiction between this impossibility result and
Theorem~\ref{thr:adaptive_unknown_parameters}. The theorem fixes
$\epsilon>0$ and $K$ before letting $T$ grow, whereas the supremum in
Corollary~\ref{cor:uniform_epsilon_impossibility} may select a different
value of $\epsilon$ for every horizon. Consistently, $\lim_{\epsilon\rightarrow0}
    (3+\epsilon)/(3(1+\epsilon))
    =
    1$.
Thus, sublinear regret is achievable for every fixed $\epsilon>0$ and
fixed $K$, but not uniformly over the entire range
$\epsilon\in(0,1]$.

\paragraph{Why Calibrating to $\epsilon=1$?}
The choice of calibrating \algnameshort\ to the finite-variance endpoint
$\epsilon=1$ may appear arbitrary, especially because the same construction
can be calibrated to any design order
$\bar\epsilon\in(0,1]$. The parameter $\bar\epsilon$ should not be
interpreted as an estimate of the unknown true order $\epsilon$. Rather, it
selects an operating point on the adaptation frontier. Calibrating to
$\bar\epsilon=1$ is a natural choice because it requires no
additional information and preserves the optimal guarantee on the
finite-variance class. Since every admissible true order satisfies
$\epsilon\le1$, this choice always corresponds to an \emph{optimistic}
calibration: unless $\epsilon=1$, the algorithm overestimates the moment
order and explores less than a strategy calibrated to the true class. The
resulting deterioration is the price required to retain the
finite-variance guarantee.

More generally, fix a calibration order $\bar\epsilon\in(0,1]$ and define $
    \bar q
    =
    \bar\epsilon/(1+2\bar\epsilon)$ and $
    \bar\beta
    =
    (1+\bar\epsilon)/(1+2\bar\epsilon)$.
The calibrated version of \algnameshort\ uses the exploration budget $\widetilde L_T(\bar\epsilon)
    =
    KB_T
    +
    \left\lceil
        K^{\bar q}T^{\bar\beta}
    \right\rceil$.
It depends on $\bar\epsilon$, but not on
$\epsilon$ or on $u$.

\begin{restatable}[Regret of \algnameshort calibrated with $\epsilon = \overline{\epsilon}$]
{theorem}{adaptiveCalibration}
\label{thr:adaptive_calibration}
Fix a calibration order $\bar\epsilon\in(0,1]$. Let $\epsilon\in(0,1]$ and $u>0$ be fixed and unknown. For
every $T\ge K\ge2$ and every instance
$\underline\nu\in\mathcal H_{\epsilon,u}$, the $\bar\epsilon$-calibrated version of
\algnameshort\ calibrated with $\overline{\epsilon}$ satisfies
\begin{equation*}
    R_T(\underline\nu)
    \le
    \widetilde{\mathcal O}\left(
        u^{\frac{1}{1+\epsilon}}
        \begin{cases}
        K^{
            \frac{
                \epsilon(1+\bar\epsilon)
            }{
                (1+\epsilon)(1+2\bar\epsilon)
            }
        }
        T^{
            
            \frac{
                1+2\bar\epsilon+\epsilon\bar\epsilon
            }{
                (1+\epsilon)(1+2\bar\epsilon)
            }
        },
        &   
        \epsilon\le\bar\epsilon,
        \\[3mm]
        K^{\frac{\bar\epsilon}{1+2\bar\epsilon}}
        T^{\frac{1+\bar\epsilon}{1+2\bar\epsilon}},
        &
        \epsilon\ge\bar\epsilon.
        \end{cases}
    \right).
    \label{eq:arbitrary_calibration_profile}
\end{equation*}
Moreover, for every fixed instance
$\underline\nu\in\mathcal H_{\epsilon,u}$,
\begin{equation}
    \limsup_{T\to+\infty}
    \frac{
        R_T(\underline\nu)
    }{
        K^{-\frac{1+\bar\epsilon}{1+2\bar\epsilon}}
        T^{\frac{1+\bar\epsilon}{1+2\bar\epsilon}}
    }
    \le
    \sum_{i:\Delta_i>0}\Delta_i.
    \label{eq:arbitrary_calibration_dep}
\end{equation}
\end{restatable}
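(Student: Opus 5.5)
The plan is to follow the exploration--estimation decomposition behind Theorem~\ref{thr:adaptiveUBfree}, keeping the true order $\epsilon$ in the concentration step and the design order $\bar\epsilon$ in the schedule. Set $m\coloneqq\lfloor L_T/K\rfloor\gtrsim B_T+K^{\bar q-1}T^{\bar\beta}$, the number of samples each arm receives. Write the regret as
\[
R_T\le \sum_i\Delta_i\,\frac{L_T}{K}+T\,\mathbb E[\Delta_{\widehat I^*}].
\]

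\textbf{Exploration term.} Since $\mathbb E|X|\le u^{1/(1+\epsilon)}$, every gap satisfies $\Delta_i\le 2u^{1/(1+\epsilon)}$. The exploration term is therefore at most $2u^{1/(1+\epsilon)}L_T=\widetilde{\mathcal O}\big(u^{1/(1+\epsilon)}K^{\bar q}T^{\bar\beta}\big)$.

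\textbf{Commit term.} Apply the MoM concentration with $B_T=\lceil 8\log(KT^3)\rceil$ blocks and a union bound over the arms. With probability at least $1-1/T^2$, every arm satisfies
\[
|\widehat\mu_i-\mu_i|\le r\coloneqq C_\epsilon\, u^{1/(1+\epsilon)}(B_T/m)^{\epsilon/(1+\epsilon)}.
\]
On this event $\Delta_{\widehat I^*}\le 2r$. The complement contributes at most $T\cdot 2u^{1/(1+\epsilon)}/T^2$. Plugging in $m$, the commit term is
\[
\widetilde{\mathcal O}\big(u^{1/(1+\epsilon)}\,T\,K^{(1-\bar q)\epsilon/(1+\epsilon)}\,T^{-\bar\beta\epsilon/(1+\epsilon)}\big).
\]
With $\bar q=\bar\epsilon/(1+2\bar\epsilon)$ and $\bar\beta=(1+\bar\epsilon)/(1+2\bar\epsilon)$ we have $1-\bar q=\bar\beta$. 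The commit term thus has $K$-exponent $\frac{\epsilon(1+\bar\epsilon)}{(1+\epsilon)(1+2\bar\epsilon)}$ and $T$-exponent
\[
1-\frac{\epsilon(1+\bar\epsilon)}{(1+\epsilon)(1+2\bar\epsilon)}=\frac{1+2\bar\epsilon+\epsilon\bar\epsilon}{(1+\epsilon)(1+2\bar\epsilon)}.
\]

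\textbf{Comparing the two terms.} This comparison is the only new step relative to Theorem~\ref{thr:adaptiveUBfree}.
\begin{itemize}
\item If $\epsilon\le\bar\epsilon$, the commit $T$-exponent is at least $\bar\beta$. This reduces to $\epsilon(1+\bar\epsilon)\le\bar\epsilon(1+\epsilon)$, i.e.\ $\epsilon\le\bar\epsilon$. Likewise the commit $K$-exponent is at least $\bar q$, which reduces to the same inequality. The commit term dominates for $T\ge K$, giving the first case.
\item If $\epsilon\ge\bar\epsilon$, the inequalities reverse and the exploration term $K^{\bar q}T^{\bar\beta}$ dominates, giving the second case.
\end{itemize}
Both monomials have nonnegative exponents in $K$ and $T$. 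Their sum is therefore bounded by the larger one whenever both of its exponents dominate, which is what happens in each regime. The saturated case $L_T=T$ is handled trivially: then $R_T\le 2u^{1/(1+\epsilon)}T$ while $T\le L_T\lesssim K^{\bar q}T^{\bar\beta}\log$, so this is already within the stated bound.

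\textbf{Distribution-dependent bound.} Here I mirror Theorem~\ref{thr:adaptiveUBdep}. Fix the instance and let $\Delta_{\min}>0$ be the smallest positive gap. Round-robin exploration contributes
\[
\sum_i\Delta_i\,\frac{L_T}{K}=\sum_i\Delta_i\big(K^{\bar q-1}T^{\bar\beta}+\mathcal O(B_T)\big).
\]
Since $\bar q-1=-\bar\beta$, this is $\sum_i\Delta_i K^{-\bar\beta}T^{\bar\beta}(1+o(1))$. The commit phase contributes at most $T\cdot\mathbb P(\text{some }|\widehat\mu_i-\mu_i|\ge\Delta_{\min}/2)$. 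For fixed $u$, $\epsilon$ and $\Delta_{\min}$, the threshold $r$ eventually falls below $\Delta_{\min}/2$ because $m\to\infty$. The MoM failure probability at a fixed accuracy with $B_T$ blocks is at most $Ke^{-B_T/8}\le 1/T^3$, so this term is $o(1)$. Dividing by $K^{-\bar\beta}T^{\bar\beta}$ and taking $\limsup$ gives \eqref{eq:arbitrary_calibration_dep}.

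\textbf{Main obstacle.} The delicate point is the MoM concentration with a block count chosen independently of $\epsilon$. I would invoke the standard MoM lemma: each block mean deviates by more than $(12u)^{1/(1+\epsilon)}s^{-\epsilon/(1+\epsilon)}$ with probability at most $1/4$, by Chebyshev-type bounds on the $(1+\epsilon)$-moment of the centered block mean. A Hoeffding bound on the number of bad blocks then gives failure probability $e^{-B_T/8}$, and $\epsilon$ enters only through the constant. I would also check that the floor $s_i=\lfloor m/B_T\rfloor\ge1$ holds, which is guaranteed by the $KB_T$ term in $\widetilde L_T$.
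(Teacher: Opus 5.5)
Your route is the same as the paper's. You use the exploration--estimation decomposition, with the true $\epsilon$ in the Median-of-Means radius and $\bar\epsilon$ only in the schedule, then compare the two resulting monomials. Your distribution-dependent part repeats the argument of Theorem~\ref{thr:adaptiveUBdep}. However, the comparison step, which you yourself call the only new ingredient, is argued incorrectly.

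You claim that for $\epsilon\le\bar\epsilon$ the commit $K$-exponent is at least $\bar q$. This is false. The inequality $\frac{\epsilon(1+\bar\epsilon)}{(1+\epsilon)(1+2\bar\epsilon)}\ge\frac{\bar\epsilon}{1+2\bar\epsilon}$ is equivalent to $\epsilon(1+\bar\epsilon)\ge\bar\epsilon(1+\epsilon)$, i.e.\ to $\epsilon\ge\bar\epsilon$, which is the opposite regime. For example, $\bar\epsilon=1$ and $\epsilon=1/2$ give a commit $K$-exponent of $2/9<1/3=\bar q$. This matches the paper's remark that the endpoint-calibrated strategy has a smaller $K$-exponent. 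So in neither regime does one monomial dominate the other coordinatewise. The dominating term always has the larger $T$-exponent and the smaller $K$-exponent, and your sentence ``both of its exponents dominate, which is what happens in each regime'' does not hold.

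The conclusion survives because both monomials have total degree one: $\bar q+\bar\beta=1$, and the commit exponents $\rho\bar\beta$ and $1-\rho\bar\beta$ also sum to one, where $\rho=\epsilon/(1+\epsilon)$ and $\bar\rho=\bar\epsilon/(1+\bar\epsilon)$. Hence
\[
\frac{K^{\bar q}T^{\bar\beta}}{K^{\rho\bar\beta}T^{1-\rho\bar\beta}}
=\Big(\frac{K}{T}\Big)^{\bar q-\rho\bar\beta},
\qquad
\bar q-\rho\bar\beta=\frac{\bar\rho-\rho}{1+\bar\rho}.
\]
Together with $K\le T$, this gives that the exploration term is at most the commit term exactly when $\epsilon\le\bar\epsilon$, and at least the commit term otherwise.

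This is precisely the paper's argument. It bounds the regret by a multiple of $B_T u^{1/(1+\epsilon)}\max\{M_T,H_T\}$ and then identifies the maximum through this ratio. The hypothesis $T\ge K$ is therefore essential: it lets the deficit in the $K$-exponent be paid for by the surplus in the $T$-exponent.

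The same degree-one observation, in the form $K\le K^{\bar q}T^{\bar\beta}$, is also what you silently need to absorb the $KB_T$ part of $L_T$ into $\widetilde{\mathcal O}(K^{\bar q}T^{\bar\beta})$. You also need it in the saturated case $L_T=T$. With this correction, the rest of your proposal goes through.
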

At the calibration order $\epsilon=\bar\epsilon$, the two branches coincide and give $\widetilde{\mathcal O}\big(
        u^{\frac{1}{1+\bar\epsilon}}
        K^{\frac{\bar\epsilon}{1+2\bar\epsilon}}
        T^{\frac{1+\bar\epsilon}{1+2\bar\epsilon}}
    \big)$,
namely the balanced unknown-$u$ rate associated with $\bar\epsilon$.

The two sides of the calibration order have different interpretations. If
$\bar\epsilon<\epsilon$, the learner underestimates the moment order and
therefore explores more than necessary. This choice is conservative: the
exploration term dominates, and the regret remains at the rate associated
with $\bar\epsilon$. If $\bar\epsilon>\epsilon$, the learner
overestimates the moment order and explores too little for the true
heavy-tailed class. The estimation term then dominates, and the regret
deteriorates as the true $\epsilon$ decreases.

The pairwise lower bound in
Theorem~\ref{thm:unknown_epsilon_frontier} shows that these two branches
cannot be improved, up to logarithmic factors, while preserving the
balanced guarantee at $\bar\epsilon$. For $\epsilon<\bar\epsilon$, this
follows by applying the lower bound to the pair
$(\epsilon,\bar\epsilon)$; for $\epsilon>\bar\epsilon$, it follows by
applying it to $(\bar\epsilon,\epsilon)$. Thus, each calibration
selects a frontier-optimal profile across moment classes. In particular,
calibrating to $\bar\epsilon=1$ does not make the strategy simultaneously
minimax-optimal at every $\epsilon$, which is impossible. Instead, it
selects the frontier-optimal profile among strategies retaining the
balanced finite-variance guarantee.

\section{Conclusions and Future Directions}

We resolved the assumption-free rate and algorithmic components of the open problem of \citet{genalti2025open}. We characterized the regret frontier induced by adaptation to the unknown moment bound $u$, provided an algorithm, \algname, matching it up to logarithmic factors, and extended the analysis to the case in which both $u$ and $\epsilon$ are unknown. Two natural directions remain open. First, it would be interesting to design an anytime version of our algorithm; a doubling-trick construction should preserve the polynomial rates, at the cost of additional logarithmic factors. Second, completing the third part of the open problem requires identifying the weakest additional assumption under which the oracle rates can be recovered.
\clearpage
\addtocontents{toc}{\vspace{2em}}
\bibliographystyle{aaai2027}
\bibliography{aaai2027}

@article{lugosi2019mean,
  title={Mean estimation and regression under heavy-tailed distributions: A survey},
  author={Lugosi, G{\'a}bor and Mendelson, Shahar},
  journal={Foundations of Computational Mathematics},
  volume={19},
  number={5},
  pages={1145--1190},
  year={2019},
  publisher={Springer}
}

@book{anderson2007long,
  title={The long tail: How endless choice is creating unlimited demand},
  author={Anderson, Chris},
  year={2007},
  publisher={Random House}
}

@article{hadiji2020adaptation,
  title={Adaptation to the Range in $ K $-Armed Bandits},
  author={Hadiji, H{\'e}di and Stoltz, Gilles},
  journal={arXiv preprint arXiv:2006.03378},
  year={2020}
}

@inproceedings{agrawal2020optimal,
  title={Optimal  $\delta$-Correct Best-Arm Selection for Heavy-Tailed Distributions},
  author={Agrawal, Shubhada and Juneja, Sandeep and Glynn, Peter},
  booktitle={Algorithmic Learning Theory},
  pages={61--110},
  year={2020},
  organization={PMLR}
}

@book{lattimore2020bandit,
  title={Bandit algorithms},
  author={Lattimore, Tor and Szepesv{\'a}ri, Csaba},
  year={2020},
  publisher={Cambridge University Press}
}

@inproceedings{ashutosh2021bandit,
  title={Bandit algorithms: Letting go of logarithmic regret for statistical robustness},
  author={Ashutosh, Kumar and Nair, Jayakrishnan and Kagrecha, Anmol and Jagannathan, Krishna},
  booktitle={International Conference on Artificial Intelligence and Statistics},
  pages={622--630},
  year={2021},
  organization={PMLR}
}

@article{bubeck2013bandits,
  title={Bandits with heavy tail},
  author={Bubeck, S{\'e}bastien and Cesa-Bianchi, Nicolo and Lugosi, G{\'a}bor},
  journal={IEEE Transactions on Information Theory},
  volume={59},
  number={11},
  pages={7711--7717},
  year={2013},
  publisher={IEEE}
}

@inproceedings{genalti2024varepsilon,
  title={$(\varepsilon, u) $-Adaptive Regret Minimization in Heavy-Tailed Bandits},
  author={Genalti, Gianmarco and Marsigli, Lupo and Gatti, Nicola and Metelli, Alberto Maria},
  booktitle={The Thirty Seventh Annual Conference on Learning Theory},
  pages={1882--1915},
  year={2024},
  organization={PMLR}
}

@inproceedings{genalti2025open,
  title={Open Problem: Regret Minimization in Heavy-Tailed Bandits with Unknown Distributional Parameters},
  author={Genalti, Gianmarco and Metelli, Alberto Maria},
  booktitle={The Thirty Eighth Annual Conference on Learning Theory},
  pages={1--5},
  year={2025},
  organization={PMLR}
}

@article{gagliolo2011algorithm,
  title={Algorithm portfolio selection as a bandit problem with unbounded losses},
  author={Gagliolo, Matteo and Schmidhuber, J{\"u}rgen},
  journal={Annals of Mathematics and Artificial Intelligence},
  volume={61},
  pages={49--86},
  year={2011},
  publisher={Springer}
}

@article{liebeherr2012delay,
  title={Delay bounds in communication networks with heavy-tailed and self-similar traffic},
  author={Liebeherr, J{\"o}rg and Burchard, Almut and Ciucu, Florin},
  journal={IEEE Transactions on Information Theory},
  volume={58},
  number={2},
  pages={1010--1024},
  year={2012},
  publisher={IEEE}
}

@inproceedings{genalti2026catoni,
  title={Catoni-Style Change Point Detection for Regret Minimization in Piecewise-Stationary Heavy-Tailed Bandits},
  author={Genalti, Gianmarco and Bhatt, Sujay and Gatti, Nicola and Metelli, Alberto Maria},
  booktitle={The 29th International Conference on Artificial Intelligence and Statistics},
  year={2026}
}

@article{chen2024uniinf,
  title={uniINF: Best-of-both-worlds algorithm for parameter-free heavy-tailed MABs},
  author={Chen, Yu and Huang, Jiatai and Dai, Yan and Huang, Longbo},
  journal={arXiv preprint arXiv:2410.03284},
  year={2024}
}

@article{tamas2024data,
  title={Data-driven upper confidence bounds with near-optimal regret for heavy-tailed bandits},
  author={Tam{\'a}s, Ambrus and Szentp{\'e}teri, Szabolcs and Cs{\'a}ji, Bal{\'a}zs Csan{\'a}d},
  journal={arXiv preprint arXiv:2406.05710},
  year={2024}
}

@article{lee2022minimax,
  title={Minimax optimal bandits for heavy tail rewards},
  author={Lee, Kyungjae and Lim, Sungbin},
  journal={IEEE Transactions on Neural Networks and Learning Systems},
  volume={35},
  number={4},
  pages={5280--5294},
  year={2022},
  publisher={IEEE}
}

@article{bahadur1956nonexistence,
  title={The nonexistence of certain statistical procedures in nonparametric problems},
  author={Bahadur, Raghu R and Savage, Leonard J},
  journal={The Annals of Mathematical Statistics},
  volume={27},
  number={4},
  pages={1115--1122},
  year={1956},
  publisher={JSTOR}
}
\clearpage
\appendix
\section{Proof of the Lower Bound for $u$-adaptive Heavy-Tailed Bandits}
\adaptiveLB*
\begin{proof}
Let
\begin{equation*}
    \rho
    \coloneqq
    \frac{\epsilon}{1+\epsilon},
    \qquad
    p
    \coloneqq
    \frac{1}{\rho}
    =
    \frac{1+\epsilon}{\epsilon},
\end{equation*}
and let
\begin{equation*}
    \Phi
    \coloneqq
    \Phi_{free}(K,T).
\end{equation*}

Fix $K\ge2$ and $\Delta>0$, and consider the deterministic instance
$\underline\nu^{(0)}$ defined by
\begin{equation*}
    \nu^{(0)}_1=\delta_\Delta,
    \qquad
    \nu^{(0)}_i=\delta_0,
    \quad i\in\{2,\ldots,K\}.
\end{equation*}
Arm $1$ is the unique optimal arm, while every arm $i\ge2$ has gap
$\Delta_i=\Delta$. Consequently,
\begin{equation*}
    R_T(\underline\nu^{(0)})
    =
    \Delta
    \sum_{i=2}^K
    \mathbb E_{\underline\nu^{(0)}}[N_i(T)],
\end{equation*}
where
\begin{equation*}
    N_i(T)
    \coloneqq
    \sum_{t=1}^T
    \mathbf{1}\{I_t=i\}.
\end{equation*}

Since $\Phi_{free}(K,T)=o(T)$ for every fixed $K$, for all sufficiently
large $T$ it holds that
\begin{equation*}
    \Phi
    \le
    \frac{2^{-\rho}}{16}T.
\end{equation*}
For such values of $T$, define
\begin{equation*}
    \beta
    \coloneqq
    \left(
        \frac{16\Phi}{T}
    \right)^p.
\end{equation*}
The preceding inequality ensures that $\beta\le1/2$.

For every suboptimal arm $i\in\{2,\ldots,K\}$, consider an alternative
instance $\underline\nu^{(i)}$ that differs from
$\underline\nu^{(0)}$ only in the distribution of arm $i$, which is
replaced by
\begin{equation*}
    \nu^{(i)}_i
    =
    (1-\beta)\delta_0
    +
    \beta\delta_{\frac{2\Delta}{\beta}}.
\end{equation*}
The mean of the modified arm is
\begin{equation*}
    \mu_i^{(i)}
    =
    \beta\frac{2\Delta}{\beta}
    =
    2\Delta.
\end{equation*}
Therefore, arm $i$ is the unique optimal arm under
$\underline\nu^{(i)}$. Arm $1$ has gap $\Delta$, while every arm
$j\notin\{1,i\}$ has gap $2\Delta$.

The $(1+\epsilon)$-moment of the modified arm is
\begin{align*}
    \mathbb E_{X\sim\nu_i^{(i)}}
    \left[
        |X|^{1+\epsilon}
    \right]
    &=
    \beta
    \left(
        \frac{2\Delta}{\beta}
    \right)^{1+\epsilon}
    \\
    &=
    (2\Delta)^{1+\epsilon}\beta^{-\epsilon}.
\end{align*}
Hence, $\underline\nu^{(i)}$ belongs to
$\mathcal H_{\epsilon,u_i}$ with
\begin{equation*}
    u_i
    \coloneqq
    (2\Delta)^{1+\epsilon}\beta^{-\epsilon},
\end{equation*}
and
\begin{equation*}
    u_i^{\frac{1}{1+\epsilon}}
    =
    2\Delta\beta^{-\rho}.
\end{equation*}

By the moment-free distribution-free guarantee,
\begin{align*}
    R_T(\underline\nu^{(i)})
    &\le
    2\Delta\beta^{-\rho}\Phi
    \\
    &=
    2\Delta
    \frac{T}{16\Phi}
    \Phi
    \\
    &=
    \frac{\Delta T}{8},
\end{align*}
where we used
\begin{equation*}
    \beta^\rho
    =
    \frac{16\Phi}{T}.
\end{equation*}

Every pull of an arm different from $i$ incurs regret at least $\Delta$
under $\underline\nu^{(i)}$. It follows that
\begin{equation*}
    \mathbb E_{\underline\nu^{(i)}}
    \left[
        T-N_i(T)
    \right]
    \le
    \frac{T}{8}.
\end{equation*}

For ease of notation, let
\begin{equation*}
    x_i
    \coloneqq
    \mathbb E_{\underline\nu^{(0)}}[N_i(T)]
\end{equation*}
and
\begin{equation*}
    y_i
    \coloneqq
    \mathbb E_{\underline\nu^{(i)}}
    \left[
        T-N_i(T)
    \right].
\end{equation*}
Thus,
\begin{equation*}
    y_i\le\frac{T}{8}.
\end{equation*}

Let $\mathbb P_0$ and $\mathbb P_i$ denote the distributions of the
complete interaction history under $\underline\nu^{(0)}$ and
$\underline\nu^{(i)}$, respectively. Since the two instances differ
only on arm $i$, the adaptive KL decomposition gives
\begin{align*}
    \mathrm{KL}(\mathbb P_0,\mathbb P_i)
    &=
    x_i
    \mathrm{KL}
    \left(
        \delta_0,
        (1-\beta)\delta_0
        +
        \beta\delta_{\frac{2\Delta}{\beta}}
    \right)
    \\
    &=
    x_i
    \log\left(\frac{1}{1-\beta}\right).
\end{align*}
Since $\beta\le1/2$,
\begin{equation*}
    \log\left(\frac{1}{1-\beta}\right)
    \le
    2\beta,
\end{equation*}
and therefore
\begin{equation*}
    \mathrm{KL}(\mathbb P_0,\mathbb P_i)
    \le
    2\beta x_i.
\end{equation*}

Consider the event
\begin{equation*}
    A_i
    \coloneqq
    \left\{
        N_i(T)>\frac{T}{2}
    \right\}.
\end{equation*}
By the definitions of $x_i$ and $y_i$,
\begin{equation*}
    x_i
    \ge
    \frac{T}{2}\mathbb P_0(A_i)
\end{equation*}
and
\begin{equation*}
    y_i
    \ge
    \frac{T}{2}\mathbb P_i(A_i^c).
\end{equation*}
The Bretagnolle--Huber inequality then gives
\begin{align*}
    x_i+y_i
    &\ge
    \frac{T}{2}
    \left(
        \mathbb P_0(A_i)
        +
        \mathbb P_i(A_i^c)
    \right)
    \\
    &\ge
    \frac{T}{4}
    \exp\left(
        -\mathrm{KL}(\mathbb P_0,\mathbb P_i)
    \right)
    \\
    &\ge
    \frac{T}{4}\exp(-2\beta x_i).
\end{align*}
Since $y_i\le T/8$, we have
\begin{equation*}
    x_i+\frac{T}{8}
    \ge
    \frac{T}{4}\exp(-2\beta x_i).
\end{equation*}

We claim that
\begin{equation*}
    x_i
    \ge
    \frac{1}{32}
    \min\left\{
        T,
        \frac{1}{\beta}
    \right\}.
\end{equation*}
Indeed, suppose by contradiction that
\begin{equation*}
    x_i
    <
    \frac{1}{32}
    \min\left\{
        T,
        \frac{1}{\beta}
    \right\}.
\end{equation*}
Then
\begin{equation*}
    x_i<\frac{T}{32}
    \qquad\text{and}\qquad
    \beta x_i<\frac{1}{32}.
\end{equation*}
Consequently,
\begin{equation*}
    x_i+\frac{T}{8}
    <
    \frac{5T}{32},
\end{equation*}
whereas
\begin{align*}
    \frac{T}{4}\exp(-2\beta x_i)
    &>
    \frac{T}{4}\exp\left(-\frac{1}{16}\right)
    \\
    &>
    \frac{3T}{16}
    =
    \frac{6T}{32}.
\end{align*}
This is a contradiction. Therefore,
\begin{equation*}
    x_i
    \ge
    \frac{1}{32}
    \min\left\{
        T,
        \left(
            \frac{T}{16\Phi}
        \right)^p
    \right\}.
\end{equation*}

Summing over the $K-1$ suboptimal arms yields
\begin{equation*}
    R_T(\underline\nu^{(0)})
    \ge
    \frac{K-1}{32}\Delta
    \min\left\{
        T,
        \left(
            \frac{T}{16\Phi}
        \right)^p
    \right\}.
\end{equation*}

It remains to remove the minimum. Let
$c_{\mathrm{std},\epsilon}>0$ be the constant appearing in
Theorem~\ref{thr:standard_lb}. Since every valid moment-free
distribution-free rate must satisfy the standard minimax lower bound,
for all sufficiently large $T$,
\begin{equation*}
    \Phi
    \ge
    c_{\mathrm{std},\epsilon}
    K^\rho
    T^{\frac{1}{1+\epsilon}}.
\end{equation*}
It follows that
\begin{align*}
    \left(
        \frac{T}{16\Phi}
    \right)^p
    &\le
    \frac{T}{
        (16c_{\mathrm{std},\epsilon})^pK
    }
    \\
    &\le
    \frac{T}{
        2(16c_{\mathrm{std},\epsilon})^p
    },
\end{align*}
where the last inequality uses $K\ge2$.

Define
\begin{equation*}
    \kappa_\epsilon
    \coloneqq
    \min\left\{
        1,
        2(16c_{\mathrm{std},\epsilon})^p
    \right\}.
\end{equation*}
The preceding upper bound implies
\begin{equation*}
    \min\left\{
        T,
        \left(
            \frac{T}{16\Phi}
        \right)^p
    \right\}
    \ge
    \kappa_\epsilon
    \left(
        \frac{T}{16\Phi}
    \right)^p.
\end{equation*}
Therefore,
\begin{equation*}
    R_T(\underline\nu^{(0)})
    \ge
    c_\epsilon
    (K-1)\Delta
    \left(
        \frac{T}{\Phi}
    \right)^p,
\end{equation*}
where
\begin{equation*}
    c_\epsilon
    \coloneqq
    \frac{\kappa_\epsilon}{
        32\,16^p
    }
    =
    \frac{
        \min\left\{
            1,
            2(16c_{\mathrm{std},\epsilon})^{
                \frac{1+\epsilon}{\epsilon}
            }
        \right\}
    }{
        32\,
        16^{\frac{1+\epsilon}{\epsilon}}
    }.
\end{equation*}
Thus, all constants are now explicitly specified in terms of the constant
$c_{\mathrm{std},\epsilon}$ from the standard minimax lower bound.

On the baseline instance,
\begin{equation*}
    \sum_{i:\Delta_i>0}\Delta_i
    =
    (K-1)\Delta.
\end{equation*}
By Definition~\ref{def:phi_dep}, for every $\eta>0$ and all sufficiently
large $T$,
\begin{equation*}
    R_T(\underline\nu^{(0)})
    \le
    (1+\eta)
    \Phi_{dep}(K,T)
    (K-1)\Delta.
\end{equation*}
Combining the upper and lower bounds and cancelling $(K-1)\Delta$ gives
\begin{equation*}
    \frac{
        \Phi_{dep}(K,T)
        \Phi_{free}(K,T)^p
    }{
        T^p
    }
    \ge
    \frac{c_\epsilon}{1+\eta}
\end{equation*}
for all sufficiently large $T$. Taking the inferior limit and then letting
$\eta\rightarrow 0$ yields
\begin{equation*}
    \liminf_{T\rightarrow+\infty}
    \frac{
        \Phi_{dep}(K,T)
        \Phi_{free}(K,T)^{
            \frac{1+\epsilon}{\epsilon}
        }
    }{
        T^{\frac{1+\epsilon}{\epsilon}}
    }
    \ge
    c_\epsilon.
\end{equation*}
This concludes the proof.
\end{proof}
\section{Proofs of the Regret Upper Bounds of \algnameshort in $u$-adaptive Heavy-Tailed Bandits}
We first derive the Median of Means concentration inequality used in both
proofs. The argument is standard and follows from the finite-moment
analysis underlying robust heavy-tailed estimation
\citep{bubeck2013bandits}.
\begin{lemma}[Concentration of the Median of Means estimator]
\label{lem:mom_concentration}
Fix $\epsilon\in(0,1]$, and let $X_1,\ldots,X_n$ be independent samples
with common mean $\mu$ satisfying
\[
    \max_{i \in [n]}\mathbb E[|X_i|^{1+\epsilon}]
    \le u.
\]
Let $1\le B\le n$, divide the samples into $B$ blocks of size
$s=\lfloor n/B\rfloor$, and let $\widehat\mu^{MoM}$ be the Median of Means
estimator defined from these blocks. Then,
\[
    \mathbb P\left(
        \left|
            \widehat\mu^{MoM}-\mu
        \right|
        >
        C_{\epsilon}^{MoM}
        u^{\frac{1}{1+\epsilon}}
        \left(
            \frac{B}{n}
        \right)^{\frac{\epsilon}{1+\epsilon}}
    \right)
    \le
    \exp\left(-\frac{B}{8}\right),
\]
where
\[
    C_{\epsilon}^{MoM}
    \coloneqq
    2^{
        1+\frac{3+\epsilon}{1+\epsilon}
    }.
\]
\end{lemma}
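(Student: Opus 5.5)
The plan is the classical three-step Median of Means argument: control a single block average in $L^{1+\epsilon}$, turn this into a constant failure probability per block via Markov's inequality, and then amplify with a binomial tail bound over the $B$ independent blocks. Throughout, write $\rho=\epsilon/(1+\epsilon)$. The blocks $\mathcal G_1,\ldots,\mathcal G_B$ are disjoint sets of independent samples, so the block averages $\overline X_1,\ldots,\overline X_B$ are independent. Discarded samples play no role.

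\textbf{Step 1 (moment of a block average).} I would center the variables, $Y_\ell=X_\ell-\mu$. Since $|\mu|^{1+\epsilon}\le\mathbb E|X_\ell|^{1+\epsilon}\le u$ by Jensen, convexity of $x\mapsto|x|^{1+\epsilon}$ gives
\[
\mathbb E|Y_\ell|^{1+\epsilon}\le 2^{\epsilon}\big(\mathbb E|X_\ell|^{1+\epsilon}+|\mu|^{1+\epsilon}\big)\le 2^{1+\epsilon}u .
\]
Next I would apply the von Bahr--Esseen inequality for independent centered variables with exponent $1+\epsilon\in(1,2]$, namely $\mathbb E|\sum_\ell Y_\ell|^{1+\epsilon}\le 2\sum_\ell\mathbb E|Y_\ell|^{1+\epsilon}$. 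For a block of size $s$ this yields
\[
\mathbb E|\overline X_b-\mu|^{1+\epsilon}\le 2^{2+\epsilon}u\,s^{-\epsilon}.
\]
This is the only genuinely nontrivial ingredient. If one prefers to avoid citing von Bahr--Esseen, a symmetrization argument (or a truncation argument as in \citet{bubeck2013bandits}) gives the same bound up to a constant. Such a substitute would, however, alter the explicit value of $C^{MoM}_\epsilon$. I expect matching that exact constant to be the main bookkeeping obstacle, which is why I would commit to the constant $2$ from von Bahr--Esseen.

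\textbf{Step 2 (per-block failure probability).} By Markov's inequality applied to $|\overline X_b-\mu|^{1+\epsilon}$, choosing
\[
r=\big(4\cdot 2^{2+\epsilon}u\big)^{\frac{1}{1+\epsilon}}s^{-\rho}=2^{\frac{4+\epsilon}{1+\epsilon}}u^{\frac{1}{1+\epsilon}}s^{-\rho}
\]
gives $\mathbb P(|\overline X_b-\mu|>r)\le 1/4$ for each block. Since $1\le B\le n$, we have $s=\lfloor n/B\rfloor\ge n/(2B)$, so $s^{-\rho}\le 2^{\rho}(B/n)^{\rho}$. Hence
\[
r\le 2^{\frac{4+2\epsilon}{1+\epsilon}}u^{\frac{1}{1+\epsilon}}(B/n)^{\rho}=C^{MoM}_\epsilon u^{\frac{1}{1+\epsilon}}(B/n)^{\rho},
\]
because $1+\frac{3+\epsilon}{1+\epsilon}=\frac{4+2\epsilon}{1+\epsilon}$.

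\textbf{Step 3 (median amplification).} Let $Z=\sum_b\mathbf 1\{|\overline X_b-\mu|>r\}$. This is a sum of $B$ independent Bernoulli variables with means at most $1/4$. First consider the case $\widehat\mu^{MoM}>\mu+r$. Then the $B-\lceil B/2\rceil+1\ge B/2$ order statistics from index $\lceil B/2\rceil$ upward all exceed $\mu+r$. Next consider the case $\widehat\mu^{MoM}<\mu-r$. Then the $\lceil B/2\rceil\ge B/2$ lowest block averages all lie below $\mu-r$. In either case $Z\ge B/2$. Hoeffding's inequality then gives
\[
\mathbb P(Z\ge B/2)\le\mathbb P\big(Z-\mathbb EZ\ge B/4\big)\le\exp\!\big(-2B(1/4)^2\big)=\exp(-B/8).
\]
Combined with the bound on $r$ from Step 2, this concludes the lemma.
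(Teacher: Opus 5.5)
Your proposal is correct and follows the paper's proof essentially step for step: the same $2^{1+\epsilon}u$ centered-moment bound, von Bahr--Esseen with constant $2$, the Markov threshold $2^{(4+\epsilon)/(1+\epsilon)}u^{1/(1+\epsilon)}s^{-\rho}$, the bound $s\ge n/(2B)$, and Hoeffding over the $B$ independent block indicators, which reproduces $C^{MoM}_\epsilon$ exactly. Your explicit count of order statistics in the median step simply spells out what the paper states in one sentence.
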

\begin{proof}
Let $p=1+\epsilon$. By Jensen's inequality,
\[
    |\mu|^p
    \le
    \mathbb E[|X_1|^p]
    \le
    u.
\]
Therefore,
\[
    \mathbb E[|X_1-\mu|^p]
    \le
    2^{p-1}
    \left(
        \mathbb E[|X_1|^p]+|\mu|^p
    \right)
    \le
    2^p u.
\]

Let $\overline X_b$ be the average of one block of size
$s=\lfloor n/B\rfloor$. The von Bahr--Esseen inequality gives
\[
    \mathbb E\left[
        |\overline X_b-\mu|^p
    \right]
    \le
    2^{p+1}u\,s^{1-p}.
\]
By Markov's inequality,
\[
    \mathbb P\left(
        |\overline X_b-\mu|
        >
        2^{1+\frac{3}{p}}
        u^{\frac1p}
        s^{-\frac{p-1}{p}}
    \right)
    \le
    \frac14.
\]
Since $s\ge n/(2B)$,
\[
    2^{1+\frac{3}{p}}
    s^{-\frac{p-1}{p}}
    \le
    2^{2+\frac{2}{p}}
    \left(
        \frac{B}{n}
    \right)^{\frac{p-1}{p}}
    =
    C_{\epsilon}^{MoM}
    \left(
        \frac{B}{n}
    \right)^{\frac{\epsilon}{1+\epsilon}}.
\]

If the Median of Means estimator deviates from $\mu$ by more than the
displayed radius, at least half of the block averages must be bad. The
blocks are independent, and each is bad with probability at most $1/4$.
Hoeffding's inequality therefore yields
\[
    \mathbb P\left(
        \left|
            \widehat\mu^{MoM}-\mu
        \right|
        >
        C_{\epsilon}^{MoM}
        u^{\frac{1}{1+\epsilon}}
        \left(
            \frac{B}{n}
        \right)^{\frac{\epsilon}{1+\epsilon}}
    \right)
    \le
    \exp\left(-\frac{B}{8}\right).
\]
\end{proof}
\adaptiveUBfree*

\begin{proof}
Let
\[
    \rho_\epsilon
    \coloneqq
    \frac{\epsilon}{1+\epsilon},
    \qquad
    r_q
    \coloneqq
    \rho_\epsilon(1-q),
\]
and recall that
\[
    \beta_\alpha
    =
    \frac{1-\alpha}{\rho_\epsilon}.
\]
For ease of notation, define
\[
    V
    \coloneqq
    u^{\frac{1}{1+\epsilon}},
    \qquad
    M_T
    \coloneqq
    K^qT^{\beta_\alpha},
    \qquad
    H_T
    \coloneqq
    K^{r_q}T^\alpha.
\]

By Jensen's inequality, $|\mu_i|\le V$ for every arm. Therefore, every
suboptimality gap is bounded by
\[
    \Delta_i
    \le
    2V.
\]

The restrictions on $q$ and $\alpha$ imply
\[
    q
    \le
    \rho_\epsilon(1-q)
    =
    r_q
\]
and
\[
    \beta_\alpha
    \le
    \alpha.
\]
Consequently,
\[
    M_T
    \le
    H_T.
\]
Moreover, $q\le\epsilon/(1+2\epsilon)$ implies
$r_q\ge\epsilon/(1+2\epsilon)$, while
$\alpha\ge(1+\epsilon)/(1+2\epsilon)$. Hence,
\[
    \alpha
    \ge
    1-r_q.
\]

We now bound the exploration length. If $K\le T$, then
\[
    K
    =
    K^{r_q}K^{1-r_q}
    \le
    K^{r_q}T^{1-r_q}
    \le
    H_T,
\]
and therefore
\[
    L_T
    \le
    KB_T+M_T+1
    \le
    (B_T+2)H_T.
\]
If $K>T$, then $\widetilde L_T\ge KB_T>T$, so that $L_T=T$. Since
$r_q+\alpha\ge1$, we also have
\[
    L_T
    =
    T
    \le
    K^{r_q}T^\alpha
    =
    H_T.
\]
Thus, in all cases,
\[
    L_T
    \le
    (B_T+2)H_T.
\]

If $L_T=T$, the algorithm only explores, and the regret satisfies
\[
    R_T(\underline\nu)
    \le
    2VL_T
    \le
    2V(B_T+2)H_T.
\]
We may therefore assume that $L_T<T$. In this case,
$L_T=\widetilde L_T$, and every arm receives at least
\[
    n_i
    \ge
    \left\lfloor
        \frac{L_T}{K}
    \right\rfloor
    \ge
    B_T
\]
samples. Moreover, writing $M_T=K^qT^{\beta_\alpha}$ and using $B_T\ge1$,
\begin{align*}
    n_i
    &\ge
    \left\lfloor
        \frac{KB_T+\lceil M_T\rceil}{K}
    \right\rfloor
    \\
    &=
    B_T+
    \left\lfloor
        \frac{\lceil M_T\rceil}{K}
    \right\rfloor
    \\
    &\ge
    B_T+\frac{M_T}{K}-1
    \ge
    K^{q-1}T^{\beta_\alpha}.
\end{align*}

Let $\mathcal E_T$ be the event on which, simultaneously for every
$i\in[K]$,
\[
    \left|
        \widehat\mu_i^{MoM}-\mu_i
    \right|
    \le
    C_{\epsilon}^{MoM}
    V
    \left(
        \frac{B_T}{n_i}
    \right)^{\rho_\epsilon}.
\]
By Lemma~\ref{lem:mom_concentration} and a union bound,
\[
    \mathbb P(\mathcal E_T^c)
    \le
    K\exp\left(-\frac{B_T}{8}\right).
\]
Since $B_T\ge8\log(KT^3)$,
\[
    \mathbb P(\mathcal E_T^c)
    \le
    \frac{1}{T^3}.
\]

On $\mathcal E_T$, every estimate has error at most
\[
    r_T
    \coloneqq
    C_{\epsilon}^{MoM}
    V
    B_T^{\rho_\epsilon}
    K^{r_q}
    T^{-\rho_\epsilon\beta_\alpha}.
\]
Since $\widehat I^*$ maximizes the estimated mean, its suboptimality gap
satisfies
\[
    \Delta_{\widehat I^*}
    \le
    2r_T.
\]
The expected regret can therefore be bounded as
\[
    R_T(\underline\nu)
    \le
    2VL_T
    +
    2r_TT
    +
    2VT\mathbb P(\mathcal E_T^c).
\]
Using
\[
    1-\rho_\epsilon\beta_\alpha
    =
    \alpha,
\]
we obtain
\[
    2r_TT
    =
    2C_{\epsilon}^{MoM}
    V
    B_T^{\rho_\epsilon}
    H_T.
\]
Combining the previous bounds gives
\[
    R_T(\underline\nu)
    \le
    2V(B_T+2)H_T
    +
    2C_{\epsilon}^{MoM}
    V
    B_T^{\rho_\epsilon}
    H_T
    +
    \frac{2V}{T^2}.
\]
Since $B_T\ge1$, $\rho_\epsilon\le1$, and $H_T\ge1$, it follows that
\[
    R_T(\underline\nu)
    \le
    \left(
        2C_{\epsilon}^{MoM}+8
    \right)
    B_T
    V
    H_T.
\]
Recalling the definitions of $V$ and $H_T$, we conclude that
\[
    R_T(\underline\nu)
    \le
    \left(
        2^{
            2+\frac{3+\epsilon}{1+\epsilon}
        }
        +8
    \right)
    B_T
    u^{\frac{1}{1+\epsilon}}
    K^{\frac{\epsilon}{1+\epsilon}(1-q)}
    T^\alpha.
\]
Since $B_T=\lceil8\log(KT^3)\rceil$, the stated
$\widetilde{\mathcal O}$ guarantee follows.
\end{proof}

\adaptiveUBdep*

\begin{proof}
Let
\[
    \rho_\epsilon
    \coloneqq
    \frac{\epsilon}{1+\epsilon},
    \qquad
    V
    \coloneqq
    u^{\frac{1}{1+\epsilon}}.
\]
If every arm is optimal, the regret is identically zero and the claim is
immediate. We may therefore assume that the instance contains at least one
suboptimal arm, and define
\[
    \Delta_{\min}
    \coloneqq
    \min_{i:\Delta_i>0}\Delta_i.
\]

Since $\alpha<1$, we have $\beta_\alpha>0$. Moreover,
\[
    \beta_\alpha
    \le
    \frac{1+\epsilon}{1+2\epsilon}
    <
    1.
\]
For every fixed $K$, it follows that
\[
    KB_T+K^qT^{\beta_\alpha}
    =
    o(T).
\]
Thus, for all sufficiently large $T$, we have
$L_T=\widetilde L_T<T$.

As in the proof of Theorem~\ref{thr:adaptiveUBfree}, every arm receives at
least
\[
    n_i
    \ge
    K^{q-1}T^{\beta_\alpha}
\]
exploration samples. Define
\[
    r_T
    \coloneqq
    C_{\epsilon}^{MoM}
    V
    \left(
        \frac{
            B_TK^{1-q}
        }{
            T^{\beta_\alpha}
        }
    \right)^{\rho_\epsilon}.
\]
Since $K$ and the instance are fixed, $B_T$ is logarithmic in $T$ and
$\beta_\alpha>0$, so that
\[
    \lim_{T\to+\infty}r_T
    =
    0.
\]
Consequently, for all sufficiently large $T$,
\[
    2r_T
    <
    \Delta_{\min}.
\]

Let $\mathcal E_T$ be the event on which all the MoM estimates differ from
their respective means by at most $r_T$. Lemma~\ref{lem:mom_concentration}
and the choice $B_T\ge8\log(KT^3)$ give
\[
    \mathbb P(\mathcal E_T^c)
    \le
    \frac{1}{T^3}.
\]
On $\mathcal E_T$, no suboptimal arm can maximize the estimated mean.
Therefore, the arm selected during the commitment phase is optimal.

During the round-robin exploration phase, every arm is selected at most
\[
    \left\lceil
        \frac{L_T}{K}
    \right\rceil
    \le
    B_T
    +
    K^{q-1}T^{\beta_\alpha}
    +
    2
\]
times. Hence, the exploration regret is bounded by
\[
    \left(
        B_T
        +
        K^{q-1}T^{\beta_\alpha}
        +
        2
    \right)
    \sum_{i:\Delta_i>0}\Delta_i.
\]
The commitment phase incurs no regret on $\mathcal E_T$. On
$\mathcal E_T^c$, its regret is at most $2VT$. We thus obtain, for every
sufficiently large $T$,
\[
    R_T(\underline\nu)
    \le
    \left(
        B_T
        +
        K^{q-1}T^{\beta_\alpha}
        +
        2
    \right)
    \sum_{i:\Delta_i>0}\Delta_i
    +
    \frac{2V}{T^2}.
\]
Dividing by $K^{q-1}T^{\beta_\alpha}$ gives
\[
    \frac{
        R_T(\underline\nu)
    }{
        K^{q-1}T^{\beta_\alpha}
    }
    \le
    \left(
        1+
        \frac{B_T+2}{
            K^{q-1}T^{\beta_\alpha}
        }
    \right)
    \sum_{i:\Delta_i>0}\Delta_i
    +
    \frac{
        2V
    }{
        K^{q-1}T^{\beta_\alpha+2}
    }.
\]
For every fixed $K$,
\[
    \lim_{T\to+\infty}
    \frac{B_T+2}{
        K^{q-1}T^{\beta_\alpha}
    }
    =
    0
\]
and
\[
    \lim_{T\to+\infty}
    \frac{
        2V
    }{
        K^{q-1}T^{\beta_\alpha+2}
    }
    =
    0.
\]
Taking the superior limit proves
\[
    \limsup_{T\to+\infty}
    \frac{
        R_T(\underline\nu)
    }{
        K^{q-1}T^{\beta_\alpha}
    }
    \le
    \sum_{i:\Delta_i>0}\Delta_i.
\]
\end{proof}
\section{Proofs for $(\epsilon,u)$-adaptivity}
Let the constant from Lemma~\ref{lem:mom_concentration} be
\begin{equation*}
    C_{\epsilon}^{MoM}
    \coloneqq
    2^{1+\frac{3+\epsilon}{1+\epsilon}}
\end{equation*}

\adaptiveUnknownParameters*

\begin{proof}
Let
\begin{equation*}
    \rho_\epsilon
    \coloneqq
    \frac{\epsilon}{1+\epsilon},
    \qquad
    V
    \coloneqq
    u^{\frac{1}{1+\epsilon}},
\end{equation*}
and define
\begin{equation*}
    M_T
    \coloneqq
    K^{1/3}T^{2/3},
    \qquad
    H_T
    \coloneqq
    K^{\frac{2\rho_\epsilon}{3}}
    T^{1-\frac{2\rho_\epsilon}{3}}.
\end{equation*}

By Jensen's inequality, $|\mu_i|\le V$ for every arm, and hence every
suboptimality gap satisfies $\Delta_i\le2V$.

Since $\rho_\epsilon\le1/2$ and $T\ge K$, we have
\begin{equation*}
    \frac{M_T}{H_T}
    =
    \left(
        \frac{K}{T}
    \right)^{\frac{1-2\rho_\epsilon}{3}}
    \le1
\end{equation*}
and
\begin{equation*}
    \frac{K}{H_T}
    =
    \left(
        \frac{K}{T}
    \right)^{1-\frac{2\rho_\epsilon}{3}}
    \le1.
\end{equation*}
It follows that
\begin{equation*}
    L_T
    \le
    KB_T+M_T+1
    \le
    (B_T+2)H_T.
\end{equation*}

If $L_T=T$, the algorithm performs only round-robin exploration, and
therefore
\begin{equation*}
    R_T(\underline\nu)
    \le
    2VL_T
    \le
    2V(B_T+2)H_T.
\end{equation*}
We may thus assume that $L_T<T$. In this case,
$L_T=\widetilde L_T$, $M_T=K^{1/3}T^{2/3}$, and the number $n_i$ of samples collected from each
arm satisfies
\begin{align*}
    n_i
    &\ge
    \left\lfloor
        \frac{KB_T+\lceil M_T\rceil}{K}
    \right\rfloor
    \\
    &=
    B_T+
    \left\lfloor
        \frac{\lceil M_T\rceil}{K}
    \right\rfloor
    \\
    &\ge
    B_T+\frac{M_T}{K}-1
    \ge
    K^{-2/3}T^{2/3}.
\end{align*}
In particular, $n_i\ge B_T$, so that the Median of Means estimator is
well defined.

Let $\mathcal E_T$ be the event on which, simultaneously for every
$i\in[K]$,
\begin{equation*}
    \left|
        \widehat\mu_i^{MoM}-\mu_i
    \right|
    \le
    C_{\epsilon}^{MoM}
    V
    \left(
        \frac{B_T}{n_i}
    \right)^{\rho_\epsilon}.
\end{equation*}
By Lemma~\ref{lem:mom_concentration} and a union bound,
\[
    \mathbb P(\mathcal E_T^c)
    \le
    K\exp\left(-\frac{B_T}{8}\right)
    \le
    \frac{1}{T^3}.
\]
On $\mathcal E_T$, the estimation error of every arm is at most
\begin{equation*}
    r_T
    \coloneqq
    C_{\epsilon}^{MoM}
    V
    B_T^{\rho_\epsilon}
    K^{\frac{2\rho_\epsilon}{3}}
    T^{-\frac{2\rho_\epsilon}{3}}.
\end{equation*}
Since $\widehat I^*$ maximizes the estimated mean, its gap satisfies
\begin{equation*}
    \Delta_{\widehat I^*}
    \le
    2r_T.
\end{equation*}
The expected regret is therefore bounded by
\begin{equation*}
    R_T(\underline\nu)
    \le
    2VL_T
    +
    2r_TT
    +
    2VT\mathbb P(\mathcal E_T^c).
\end{equation*}
Using the preceding bounds, we obtain
\begin{align*}
    R_T(\underline\nu)
    &\le
    2V(B_T+2)H_T
    +
    2C_{\epsilon}^{MoM}
    VB_T^{\rho_\epsilon}H_T
    +
    \frac{2V}{T^2}
    \\
    &\le
    \left(
        8+2C_{\epsilon}^{MoM}
    \right)
    B_TVH_T,
\end{align*}
where we used $B_T\ge1$, $\rho_\epsilon\le1$, and $H_T\ge1$.

Recalling the definitions of $V$ and $H_T$, we conclude that
\begin{equation*}
    R_T(\underline\nu)
    \le
    \left(
        8+
        2^{2+\frac{3+\epsilon}{1+\epsilon}}
    \right)
    B_T
    u^{\frac{1}{1+\epsilon}}
    K^{\frac{2\epsilon}{3(1+\epsilon)}}
    T^{\frac{3+\epsilon}{3(1+\epsilon)}}.
\end{equation*}
This proves the distribution-free guarantee.

We now prove the distribution-dependent claim. If every arm is optimal,
then the regret is identically zero. Otherwise, define
\begin{equation*}
    \Delta_{\min}
    \coloneqq
    \min_{i:\Delta_i>0}\Delta_i.
\end{equation*}
For every fixed $K$,
\begin{equation*}
    KB_T+K^{1/3}T^{2/3}
    =
    o(T),
\end{equation*}
and hence $L_T=\widetilde L_T<T$ for all sufficiently large $T$.

On the event $\mathcal E_T$, every estimate has error at most $r_T$.
Since the instance and $K$ are fixed,
\begin{equation*}
    \lim_{T\to+\infty}r_T=0.
\end{equation*}
Consequently, for all sufficiently large $T$,
\begin{equation*}
    2r_T<\Delta_{\min},
\end{equation*}
and the arm selected during the commitment phase is optimal.

During round-robin exploration, every arm is pulled at most
\begin{equation*}
    \left\lceil
        \frac{L_T}{K}
    \right\rceil
    \le
    B_T+
    K^{-2/3}T^{2/3}
    +2
\end{equation*}
times. The exploration regret is thus at most
\begin{equation*}
    \left(
        B_T+
        K^{-2/3}T^{2/3}
        +2
    \right)
    \sum_{i:\Delta_i>0}\Delta_i.
\end{equation*}
The commitment phase incurs no regret on $\mathcal E_T$ and at most
$2VT$ regret on $\mathcal E_T^c$. Therefore, for all sufficiently large
$T$,
\begin{equation*}
    R_T(\underline\nu)
    \le
    \left(
        B_T+
        K^{-2/3}T^{2/3}
        +2
    \right)
    \sum_{i:\Delta_i>0}\Delta_i
    +
    \frac{2V}{T^2}.
\end{equation*}
Dividing by $K^{-2/3}T^{2/3}$ and taking the superior limit gives
\begin{equation*}
    \limsup_{T\to+\infty}
    \frac{
        R_T(\underline\nu)
    }{
        K^{-2/3}T^{2/3}
    }
    \le
    \sum_{i:\Delta_i>0}\Delta_i,
\end{equation*}
because $B_T=\mathcal O(\log T)$ for every fixed $K$.
\end{proof}

\unknownepsilonfrontier*

\begin{proof}
Let
\begin{equation*}
    a
    \coloneqq
    \frac{\epsilon}{1+\epsilon},
    \qquad
    \Phi'
    \coloneqq
    \Phi_{\epsilon'}(K,T).
\end{equation*}

We first prove that
\[
    \Phi'\ge\frac{K-1}{2}.
\]
Fix $\Delta>0$ and consider the all-zero reference process, in which every
arm deterministically returns zero. For a fixed realization of the
strategy's internal randomness, let
\[
    \tau_j
    =
    \inf\{t\in[T]:I_t=j\},
\]
with $\tau_j=+\infty$ if arm $j$ is never selected, and set
\[
    \sigma_j
    =
    (\tau_j-1)\wedge T.
\]

Let $m$ be the number of arms selected at least once, and denote their
ordered first-visit times by
\[
    r_1<r_2<\cdots<r_m.
\]
Since at least $\ell$ distinct arms must have been visited by round
$r_\ell$, we have $r_\ell\ge\ell$. Moreover, every unvisited arm
contributes $T$ to the sum of the $\sigma_j$. Since $T\ge K$,
\begin{align*}
    \sum_{j=1}^K\sigma_j
    &\ge
    \sum_{\ell=1}^m(r_\ell-1)
    +(K-m)T
    \\
    &\ge
    \frac{m(m-1)}{2}
    +(K-m)K
    \\
    &\ge
    \frac{K(K-1)}{2}.
\end{align*}
Taking expectation with respect to the internal randomness of the strategy,
there exists an arm $j\in[K]$ such that
\[
    \mathbb E_0[\sigma_j]
    \ge
    \frac{K-1}{2}.
\]

Now consider the deterministic instance in which arm $j$ always returns
$\Delta$ and every other arm always returns zero. Until arm $j$ is first
selected, the observed history coincides with the all-zero reference
process. Therefore, the regret on this instance is at least
\[
    \Delta\mathbb E_0[\sigma_j].
\]
Its intrinsic moment scale at order $\epsilon'$ equals $\Delta$, and hence
\[
    \Phi'
    \ge
    \frac{
        \Delta\mathbb E_0[\sigma_j]
    }{
        \Delta
    }
    \ge
    \frac{K-1}{2}.
\]

Fix $\Delta>0$ and consider the baseline instance
$\underline\nu^{(0)}$ defined by
\begin{equation*}
    \nu_1^{(0)}=\delta_\Delta,
    \qquad
    \nu_i^{(0)}=\delta_0,
    \quad i\in\{2,\ldots,K\}.
\end{equation*}
Its intrinsic scale at order $\epsilon'$ is $\Delta$. Hence,
\begin{equation*}
    R_T(\underline\nu^{(0)})
    =
    \Delta
    \sum_{i=2}^K
    \mathbb E_0[N_i(T)]
    \le
    \Delta\Phi'.
\end{equation*}
It follows that
\begin{equation*}
    \mathbb E_0[N_1(T)]
    \ge
    T-\Phi'
    \ge
    \frac{3T}{4}.
\end{equation*}
Moreover, there exists an arm $j\in\{2,\ldots,K\}$ such that
\begin{equation*}
    \mathbb E_0[N_j(T)]
    \le
    \frac{\Phi'}{K-1}.
\end{equation*}

Define
\begin{equation*}
    \beta
    \coloneqq
    \frac{K-1}{128\Phi'}.
\end{equation*}
The preliminary lower bound on $\Phi'$ ensures that
$\beta\le1/64$. Construct an alternative instance
$\underline\nu^{(j)}$ by replacing only arm $j$ with
\begin{equation*}
    \nu_j^{(j)}
    =
    (1-\beta)\delta_0
    +
    \beta\delta_{\frac{2\Delta}{\beta}}.
\end{equation*}
The mean of the modified arm is $2\Delta$, so arm $j$ is optimal and arm
$1$ has gap $\Delta$.

The intrinsic scale of the alternative instance at order $\epsilon$ is
\begin{align*}
    U_\epsilon(\underline\nu^{(j)})
    &=
    \left(
        \beta
        \left(
            \frac{2\Delta}{\beta}
        \right)^{1+\epsilon}
    \right)^{\frac{1}{1+\epsilon}}
    \\
    &=
    2\Delta\beta^{-\frac{\epsilon}{1+\epsilon}}.
\end{align*}
Consequently,
\[
    R_T(\underline\nu^{(j)})
    \le
    2\Delta
    \beta^{-\frac{\epsilon}{1+\epsilon}}
    \Phi_\epsilon(K,T).
\]

Let $\mathbb P_0$ and $\mathbb P_j$ denote the laws of the complete
interaction history under the baseline and alternative instances. By the
adaptive KL decomposition,
\begin{align*}
    \mathrm{KL}(\mathbb P_0,\mathbb P_j)
    &=
    \mathbb E_0[N_j(T)]
    \log\left(\frac{1}{1-\beta}\right)
    \\
    &\le
    2\beta
    \frac{\Phi'}{K-1}
    =
    \frac{1}{64}.
\end{align*}

Since $N_1(T)/T\in[0,1]$, Pinsker's inequality gives
\begin{align*}
    \left|
        \frac{\mathbb E_0[N_1(T)]}{T}
        -
        \frac{\mathbb E_j[N_1(T)]}{T}
    \right|
    &\le
    \sqrt{
        \frac{
            \mathrm{KL}(\mathbb P_0,\mathbb P_j)
        }{2}
    }
    \\
    &\le
    \frac{1}{8\sqrt{2}}
    \le
    \frac18.
\end{align*}
It follows that
\begin{equation*}
    \mathbb E_j[N_1(T)]
    \ge
    \frac{5T}{8}.
\end{equation*}
Every pull of arm $1$ incurs regret $\Delta$ under the alternative
instance, and hence
\begin{equation*}
    R_T(\underline\nu^{(j)})
    \ge
    \frac{5\Delta T}{8}.
\end{equation*}
Combining the upper and lower bounds on the alternative regret yields
\begin{equation*}
    \Phi_\epsilon(K,T)
    \ge
    \frac{5}{16}T\beta^a.
\end{equation*}
Therefore,
\begin{align*}
    \Phi_\epsilon(K,T)
    \Phi_{\epsilon'}(K,T)^a
    &\ge
    \frac{5}{16}
    T
    \left(
        \frac{K-1}{128}
    \right)^a
    \\
    &\ge
    \frac{5}{16\sqrt{128}}
    T(K-1)^a\\
    &\ge\frac{5}{16\sqrt{256}}
    TK^a\\
\end{align*}
where the last two inequalities use $a\le1/2$. The claim follows with
\begin{equation*}
    c_1
    \coloneqq
    \frac{5}{16\sqrt{256}}
    =
    \frac{5}{256\sqrt{2}}.
\end{equation*}
\end{proof}

\uniformepsilonimpossibility*

\begin{proof}
Let
\begin{equation*}
    c_2
    \coloneqq
    \frac{5}{16\sqrt{128}}.
\end{equation*}
If
\begin{equation*}
    \Phi_1(K,T)>\frac{T}{4},
\end{equation*}
then
\begin{equation*}
    \sup_{\epsilon\in(0,1]}
    \Phi_\epsilon(K,T)
    \ge
    \Phi_1(K,T)
    >
    \frac{T}{4}
    \ge
    c_2T.
\end{equation*}

Suppose instead that $\Phi_1(K,T)\le T/4$. For every
$\epsilon\in(0,1]$, Theorem~\ref{thm:unknown_epsilon_frontier} with
$\epsilon'=1$ gives
\begin{align*}
    \Phi_\epsilon(K,T)
    &\ge
    c_2
    T
    (K-1)^{\frac{\epsilon}{1+\epsilon}}
    \Phi_1(K,T)^{-\frac{\epsilon}{1+\epsilon}}
    \\
    &\ge
    c_2
    T
    \left(
        \frac{4(K-1)}{T}
    \right)^{\frac{\epsilon}{1+\epsilon}}.
\end{align*}
Letting $\epsilon$ decrease to zero, the last multiplicative factor
converges to one. Therefore,
\begin{equation*}
    \sup_{\epsilon\in(0,1]}
    \Phi_\epsilon(K,T)
    \ge
    c_2T.
\end{equation*}
This proves the claim.
\end{proof}

\adaptiveCalibration*

\begin{proof}
Let
\begin{equation*}
    \rho
    \coloneqq
    \frac{\epsilon}{1+\epsilon},
    \qquad
    \bar\rho
    \coloneqq
    \frac{\bar\epsilon}{1+\bar\epsilon},
    \qquad
    V
    \coloneqq
    u^{\frac{1}{1+\epsilon}}.
\end{equation*}
The calibration parameters can equivalently be written as
\begin{equation*}
    \bar q
    =
    \frac{\bar\rho}{1+\bar\rho},
    \qquad
    \bar\beta
    =
    \frac{1}{1+\bar\rho}.
\end{equation*}
In particular, $1-\bar q=\bar\beta$ and
$\bar q+\bar\beta=1$.

Define
\begin{equation*}
    M_T
    \coloneqq
    K^{\bar q}T^{\bar\beta},
    \qquad
    H_T
    \coloneqq
    K^{\rho\bar\beta}
    T^{1-\rho\bar\beta},
    \qquad
    G_T
    \coloneqq
    \max\{M_T,H_T\}.
\end{equation*}
Since $T\ge K$ and $\bar q+\bar\beta=1$, we have $M_T\ge K$.
Therefore,
\begin{equation*}
    L_T
    \le
    KB_T+M_T+1
    \le
    (B_T+2)G_T.
\end{equation*}

If $L_T=T$, Jensen's inequality gives $|\mu_i|\le V$ and hence
$\Delta_i\le2V$ for every arm. Thus,
\begin{equation*}
    R_T(\underline\nu)
    \le
    2VL_T
    \le
    2V(B_T+2)G_T.
\end{equation*}

Suppose now that $L_T<T$. In this case,
$L_T=\widetilde L_T(\bar\epsilon)$, and every arm receives at least
\begin{align*}
    n_i
    &\ge
    \left\lfloor
        \frac{
            KB_T+\lceil M_T\rceil
        }{K}
    \right\rfloor
    \\
    &\ge
    B_T+\frac{M_T}{K}-1
    \ge
    \frac{M_T}{K}
\end{align*}
exploration samples.

Let $\mathcal E_T$ be the event on which all the Median of Means estimates
satisfy the concentration bound of
Lemma~\ref{lem:mom_concentration}. Since
$B_T=\lceil8\log(KT^3)\rceil$, a union bound gives
\begin{equation*}
    \mathbb P(\mathcal E_T^c)
    \le
    K\exp\left(-\frac{B_T}{8}\right)
    \le
    \frac{1}{T^3}.
\end{equation*}
On $\mathcal E_T$, every estimate has error at most
\begin{align*}
    r_T
    &\coloneqq
    C_\epsilon^{MoM}
    V
    \left(
        \frac{B_T}{n_i}
    \right)^\rho
    \\
    &\le
    C_\epsilon^{MoM}
    V
    B_T^\rho
    K^{\rho(1-\bar q)}
    T^{-\rho\bar\beta}
    \\
    &=
    C_\epsilon^{MoM}
    V
    B_T^\rho
    K^{\rho\bar\beta}
    T^{-\rho\bar\beta}.
\end{align*}
Since the committed arm maximizes the estimated mean, its gap is at most
$2r_T$. Consequently,
\begin{align*}
    R_T(\underline\nu)
    &\le
    2VL_T
    +
    2r_TT
    +
    2VT\mathbb P(\mathcal E_T^c)
    \\
    &\le
    2V(B_T+2)G_T
    +
    2C_\epsilon^{MoM}
    VB_T^\rho H_T
    +
    \frac{2V}{T^2}.
\end{align*}
Using $B_T\ge1$, $\rho\le1$, and $G_T\ge1$, we obtain the explicit bound
\begin{equation}
    R_T(\underline\nu)
    \le
    \left(
        8+2C_\epsilon^{MoM}
    \right)
    B_TVG_T.
    \label{eq:arbitrary_calibration_master_bound}
\end{equation}

It remains to identify which term defines $G_T$. We have
\begin{equation*}
    \frac{M_T}{H_T}
    =
    \left(
        \frac{K}{T}
    \right)^{
        \frac{\bar\rho-\rho}{1+\bar\rho}
    }.
\end{equation*}
Since $T\ge K$, if $\epsilon\le\bar\epsilon$, then
$\rho\le\bar\rho$ and $M_T\le H_T$. Equation
\eqref{eq:arbitrary_calibration_master_bound} therefore gives
\begin{equation*}
    R_T(\underline\nu)
    \le
    \left(
        8+2C_\epsilon^{MoM}
    \right)
    B_TV
    K^{
        \frac{
            \epsilon(1+\bar\epsilon)
        }{
            (1+\epsilon)(1+2\bar\epsilon)
        }
    }
    T^{
        1-
        \frac{
            \epsilon(1+\bar\epsilon)
        }{
            (1+\epsilon)(1+2\bar\epsilon)
        }
    }.
\end{equation*}
If instead $\epsilon\ge\bar\epsilon$, then
$\rho\ge\bar\rho$ and $M_T\ge H_T$, yielding
\begin{equation*}
    R_T(\underline\nu)
    \le
    \left(
        8+2C_\epsilon^{MoM}
    \right)
    B_TV
    K^{\frac{\bar\epsilon}{1+2\bar\epsilon}}
    T^{\frac{1+\bar\epsilon}{1+2\bar\epsilon}}.
\end{equation*}
This proves the distribution-free guarantee.

We finally prove the distribution-dependent guarantee. If every arm is
optimal, the claim is immediate. Otherwise, define
\begin{equation*}
    \Delta_{\min}
    \coloneqq
    \min_{i:\Delta_i>0}\Delta_i.
\end{equation*}
For every fixed $K$ and fixed $\bar\epsilon>0$,
\begin{equation*}
    KB_T+K^{\bar q}T^{\bar\beta}
    =
    o(T),
\end{equation*}
so $L_T<T$ for all sufficiently large $T$. Moreover, the radius $r_T$
converges to zero for every fixed true $\epsilon>0$. Hence, for all
sufficiently large $T$, $2r_T<\Delta_{\min}$ and the committed arm is
optimal on $\mathcal E_T$.

During round-robin exploration, every arm is selected at most
\begin{equation*}
    B_T
    +
    K^{\bar q-1}T^{\bar\beta}
    +
    2
\end{equation*}
times. Therefore,
\begin{equation*}
    R_T(\underline\nu)
    \le
    \left(
        B_T
        +
        K^{\bar q-1}T^{\bar\beta}
        +
        2
    \right)
    \sum_{i:\Delta_i>0}\Delta_i
    +
    \frac{2V}{T^2}.
\end{equation*}
Dividing by $K^{\bar q-1}T^{\bar\beta}$ and taking the superior limit gives
\begin{equation*}
    \limsup_{T\to+\infty}
    \frac{
        R_T(\underline\nu)
    }{
        K^{\bar q-1}T^{\bar\beta}
    }
    \le
    \sum_{i:\Delta_i>0}\Delta_i.
\end{equation*}
Since
$\bar q-1=-(1+\bar\epsilon)/(1+2\bar\epsilon)$, this is exactly
Equation~\eqref{eq:arbitrary_calibration_dep}.
\end{proof}

\end{document}